\documentclass[twoside]{article}

\usepackage[preprint]{aistats2026}
\usepackage[round,sort&compress]{natbib}
\usepackage[utf8]{inputenc}
\usepackage{amsmath}
\usepackage{color}
\usepackage{xcolor}
\usepackage{bbm}
\usepackage{breakcites}
\usepackage{thm-restate}

\usepackage{hyperref}
\hypersetup{
    colorlinks=false,
    linkcolor=blue,
    filecolor=magenta,      
    urlcolor=blue,
    pdfpagemode=FullScreen,
    }

\newcommand{\cX}{\mathcal{X}}

\def\E{\mathbb{E}}

\def\1{\mathbbm{1}}

\usepackage{amsmath}
\usepackage{amssymb}
\usepackage{mathtools}
\usepackage{amsthm}
\theoremstyle{plain}
\newtheorem{theorem}{Theorem}[section]
\newtheorem{proposition}[theorem]{Proposition}
\newtheorem{lemma}[theorem]{Lemma}

\theoremstyle{definition}

\theoremstyle{remark}

 \usepackage{algorithm}
\usepackage[ruled,lined,algo2e,noend, linesnumbered]{algorithm2e}
\SetKwInput{KwInit}{Initialization}
\allowdisplaybreaks
\begin{document}

%
\runningtitle{Last-Iterate Convergence in Bilinear Saddle-Point Problems under Bandit Feedback}

%
\runningauthor{Maiti*, Zhang*, Jamieson, Morgenstern, Panageas, Ratliff}

\newcommand{\equalcontrib}{\textsuperscript{*}}

\twocolumn[

\aistatstitle{Efficient Uncoupled Learning Dynamics with $\tilde{O}\!\left(T^{-1/4}\right)$ Last-Iterate Convergence in Bilinear Saddle-Point Problems over Convex Sets under Bandit Feedback}

\aistatsauthor{
Arnab Maiti\equalcontrib\textsuperscript{1} \And
Claire Jie Zhang\equalcontrib\textsuperscript{1} \And
Kevin Jamieson\textsuperscript{1} \AND
Jamie Heather Morgenstern\textsuperscript{1,3} \And
Ioannis Panageas\textsuperscript{2} \And
Lillian J. Ratliff\textsuperscript{1}
}

\aistatsaddress{
\textsuperscript{1}University of Washington \And
\textsuperscript{2}University of California, Irvine \And
\textsuperscript{3}Amazon
}


]

\begin{abstract}
  
  In this paper, we study last-iterate convergence of learning algorithms in bilinear saddle-point problems, a preferable notion of convergence that captures the day-to-day behavior of learning dynamics. We focus on the challenging setting where players select actions from compact convex sets and receive only bandit feedback.  Our main contribution is the design of an uncoupled learning algorithm that guarantees last-iterate convergence to the Nash equilibrium with high probability. We establish a convergence rate of $\tilde{O}(T^{-1/4})$ up to polynomial factors in problem parameters. Crucially, our proposed algorithm is computationally efficient, requiring only an efficient linear optimization oracle over the players' compact action sets. The algorithm is obtained by combining techniques from experimental design and the classic Follow-The-Regularized-Leader (FTRL) framework, with a carefully chosen regularizer function tailored to the geometry of the action set of each learner. 

\end{abstract}

\section{INTRODUCTION}
Online learning in games is a well-studied area (\cite{syrgkanis2015fast, chen2020hedging, daskalakis2021near, anagnostides2022uncoupled}) that investigates the convergence properties of learning algorithms in game-theoretic settings. This line of research has been instrumental in developing superhuman AI agents for competitive environments such as Go \citep{silver2017mastering}, Poker \citep{brown2018superhuman} and Diplomacy \citep{meta2022human}. It is well known that standard algorithms such as Follow-the-Regularized-Leader (FTRL) and Mirror Descent converge to a Nash equilibrium in the average-iterate sense under self-play. In other words, while individual strategies may remain far from equilibrium, their average converges to the Nash equilibrium. The seminal works of \cite{daskalakis2011near} and \cite{rakhlin2013online} further strengthened this understanding by establishing near-optimal convergence rates in the average-iterate sense. However, works by \cite{mertikopoulos2018cycles,bailey2018multiplicative} showed that many standard algorithms that succeed in the average-iterate sense fail to converge in the last-iterate sense, which is often more desirable in practice as it reflects the day-to-day behavior of the learners.

Motivated by this negative result, a new line of work has focused on designing uncoupled learning algorithms that achieve last-iterate convergence to a Nash equilibrium in self-play \cite{daskalakis2018last,cai2022finite,cai2024fast}. In particular, optimistic variants of classical algorithms have been shown to exhibit last-iterate convergence under the gradient feedback setting \cite{daskalakis2017training,liang2019interaction,wei2020linear}. Moreover, \cite{wei2020linear} established last-iterate linear convergence for bilinear games with polytope action sets under gradient feedback.

Algorithms under bandit feedback-where only the payoff of the chosen action is observed, in contrast to the richer gradient feedback-form a well-studied area in the multi-armed bandits literature due to their practical relevance \cite{auer2002nonstochastic,bubeck2012towards,neu2015explore,zimmert2022return}, but results on last-iterate convergence remain relatively sparse. Under a variant of the standard bandit feedback model, \cite{cai2023uncoupled} first showed convergence to a Nash equilibrium with high probability at a rate of $T^{-1/8}$ in matrix games, later improved to $T^{-1/5}$ in \cite{cai2025average}.

While classical game theory has deep roots in discrete actions, many modern strategic interactions are inherently continuous. Players often select from a continuum of strategies rather than a finite list, as in applications such as algorithmic pricing, resource allocation, routing and multi-agent robotics \citep{besbes2009dynamic, den2015dynamic, krichene2015online}. Related ideas also appear in the alignment of language models \citep{munos2023nash}. These settings are formally captured by compact convex action sets, for which no high-probability last-iterate guarantees under standard bandit feedback are currently known. The only established result under the standard bandit feedback model is due to \cite{dong2024uncoupled}, who proposed an uncoupled learning algorithm whose iterates converge to a Nash equilibrium only in expectation at a rate of $T^{-1/6}$.

This gap motivates our central question:

\begin{quote}
\textit{Given a bilinear function with compact and convex action sets, does there exist an uncoupled learning algorithm whose iterates converge to a Nash equilibrium with high probability in the self-play setting, under bandit feedback?} 
\end{quote}

\subsection{Problem Setting}
In this paper, we answer the above question in the affirmative. To this end, we formalize the setting of last-iterate convergence in bilinear saddle-point problems under bandit feedback. Let $\mathcal{X} \subset \mathbb{R}^{n}$ and $\mathcal{Y} \subset \mathbb{R}^{m}$ be compact, convex sets, and let $A \in \mathbb{R}^{n \times m}$ be an input matrix. We assume $\mathrm{span}(\mathcal{X}) = \mathbb{R}^{n}$ and $\mathrm{span}(\mathcal{Y}) = \mathbb{R}^{m}$. For simplicity of presentation, throughout this paper we also assume that $\langle x, Ay \rangle \in [-1,1]$ for all $x \in \mathcal{X}$ and $y \in \mathcal{Y}$. 

In each round $k$, the row player selects $x_k \in \mathcal{X}$ and the column player selects $y_k \in \mathcal{Y}$. They then receive standard bandit feedback in the form of $\langle x_k, Ay_k \rangle$ and $-\langle x_k, Ay_k \rangle$, respectively. A variant of this feedback was studied by \cite{cai2023uncoupled,cai2022finite} for probability simplices, where $i_k \sim x_k$ and $j_k \sim y_k$ are sampled, and the players observe only $A_{i_k,j_k}$ and $-A_{i_k,j_k}$. Even when $\mathcal{X}$ and $\mathcal{Y}$ are probability simplices, the two feedback types are fundamentally different, and the results are not directly comparable.

We focus on \emph{uncoupled learning algorithms}, which operate entirely on a player’s own action set and make no assumptions about the opponent: they do not observe the opponent’s actions, do not know the opponent’s action set, and not even the dimension of the action set. The goal is to design such algorithms for both players under standard bandit feedback so that the pair $(x_k,y_k)$ forms an $\varepsilon_k$-approximate Nash equilibrium (last-iterate convergence) with high probability, where $\varepsilon_k$ depends polynomially on $n$ and $m$ and satisfies $\lim_{k \to \infty} \varepsilon_k = 0$.  

Recall that a pair $(\tilde x,\tilde y)$ is an $\varepsilon$-approximate Nash equilibrium if, for all $(x,y) \in \mathcal{X} \times \mathcal{Y}$,
\[
    \langle x, A \tilde y \rangle - \varepsilon 
    \;\leq\; \langle \tilde x, A \tilde y \rangle 
    \;\leq\; \langle \tilde x, Ay \rangle + \varepsilon.
\]

If only $(\mathbb{E}[x_k], \mathbb{E}[y_k])$ can be shown to form an $\varepsilon_k$-approximate Nash equilibrium, then the convergence is said to hold only in expectation, as in \cite{dong2024uncoupled}. Such convergence guarantees are weaker, since they do not ensure convergence along a single trajectory and may require multiple runs to learn an equilibrium, which is often undesirable in practice.



\subsection{Contributions}
In this paper, we design the first uncoupled learning dynamics whose iterates exhibit last-iterate convergence with high probability under standard bandit feedback for bilinear saddle-point problems over convex sets. Formally, we construct uncoupled learning dynamics such that the pair $(x_k,y_k)$ forms an $\varepsilon_k$-approximate Nash equilibrium with probability at least $1-\delta$, where
$$
\varepsilon_k = \mathrm{poly}(n,m,\log(k/\delta))\,k^{-1/4}.
$$
This result also improves upon the $k^{-1/6}$ convergence rate of \cite{dong2024uncoupled}, who established last-iterate convergence only in expectation. Moreover, if the action sets admit efficient linear optimization oracles, our dynamics can be implemented in polynomial time.  

Our approach builds on the average-to-last-iterate framework recently introduced by \cite{cai2025average}. The key challenge in adapting this framework to our setting is that, unlike \cite{cai2025average}, who worked with probability simplices under a variant of bandit feedback where estimating rewards is relatively straightforward and the negative entropy regularizer is a natural choice (well aligned with the $(\|\cdot\|_1,\|\cdot\|_\infty)$ primal-dual pair), we consider arbitrary compact convex sets under standard bandit feedback. This makes reward estimation significantly more challenging, since strategies must remain approximate equilibria and we cannot freely explore suboptimal strategies. We address this difficulty through a carefully designed sampling procedure that leverages experimental design techniques from linear bandits.  

In addition, we construct regularizers tailored to the geometry of the action sets. This is necessary to ensure that the regularizer is compatible with the norms naturally arising from our estimation guarantees. Finally, all of these steps must be carried out while preserving computational efficiency whenever the action sets admit efficient linear optimization oracles, which is ensured by our algorithm.
\subsection{Related Work}
\cite{muthukumar2020impossibility} ruled out last-iterate convergence for certain well-known classes of uncoupled learning dynamics that would have served as natural candidates under the bandit feedback setting. The first results on last-iterate convergence rates for uncoupled learning dynamics in two-player zero-sum games under a variant of standard bandit feedback were presented by \cite{cai2023uncoupled}. They showed that simple uncoupled dynamics based on mirror descent with KL-divergence, combined with carefully chosen subsets of action sets and suitable loss estimators, achieve a last-iterate convergence rate of $\tilde{O}(T^{-1/8})$ with high probability and $\tilde{O}(T^{-1/6})$ in expectation. In the same work, they also generalized their results to Markov games. The high-probability rate was later improved to $\tilde{O}(T^{-1/5})$ by \cite{cai2025average}, while a concurrent work by \cite{fiegel2025harder} established a lower bound of $\Omega(T^{-1/3})$ for this setting. Recently, \cite{chen2023finite,chen2024last} proposed smoothed best-response dynamics for two-player zero-sum stochastic games.  

In the bilinear setting under standard bandit feedback, \cite{dong2024uncoupled} introduced mirror descent based uncoupled dynamics with appropriate gradient estimators, achieving an $O(T^{-1/6})$ last-iterate convergence rate, though only in expectation. In a broader class of monotone games, \cite{tatarenko2019learning} established asymptotic last-iterate convergence to Nash equilibrium, albeit without finite-time guarantees.  

The literature on learning in games is extensive. Here, we primarily focused on works concerning last-iterate convergence under bandit feedback in two-player zero-sum games. For results on average-iterate convergence, we refer the reader to \cite{daskalakis2011near,syrgkanis2015fast,rakhlin2013online,chen2020hedging,daskalakis2021near,anagnostides2022uncoupled} and the references therein. For results on last-iterate convergence under gradient feedback, see \cite{daskalakis2017training, liang2019interaction,  wei2020linear, daskalakis2018last,anagnostides2022last, abe2024boosting,cai2025average} and the references therein. For other conditions such as strict equilibria and strong monotonicity, we refer the reader to \cite{giannou2021rate,jordan2025adaptive,ba2025doubly} and the references therein.
\section{ALGORITHM WITH LAST-ITERATE CONVERGENCE
}
Recently, \cite{cai2025average} introduced a framework for zero-sum games over probability simplices that transforms uncoupled dynamics with average-iterate convergence guarantees into ones with last-iterate convergence guarantees. The framework runs an average-iterate algorithm over multiple phases, where a phase $t$ consists of $B_t$ rounds. In a phase $t$, if the average-iterate algorithm outputs $\tilde{x}_t$, the framework plays strategies $x_k$ close to $\bar{x}_t := \tfrac{1}{t}\sum_{s=1}^t \tilde{x}_s$ for each round $k$ in that phase. These strategies are then used to estimate $A\widehat{y}_t$, where $\widehat{y}_t$ denotes the expected strategy of the other player in phase $t$. This estimate defines a phase utility vector that, when fed back into the average-iterate algorithm, drives $\bar{x}_t$ toward equilibrium. Since the framework plays strategies near $\bar{x}_t$, last-iterate convergence is achieved.


We adapt this framework to our setting in order to achieve last-iterate convergence, with details given in Algorithm \ref{algo:last_iter_bandit_feedback}. The main challenge lies in sampling in each phase so as to estimate $A\widehat{y}_t$ accurately with respect to a suitable dual norm, and in constructing a regularizer that is strongly convex with respect to a corresponding primal norm while maintaining low Bregman divergence. Our approaches to these challenges are presented in Sections \ref{sec:estimator} and \ref{subsec:regularizer_intro}, which form the core technical contributions of this paper. We state our main result in Section \ref{sec:main-result}.

\begin{algorithm}
\caption{Last-iterate algorithm for the row player under bandit feedback}\label{algo:last_iter_bandit_feedback}

\KwIn{Probability error term $\delta \in (0,\tfrac{1}{2}]$, step size $\eta > 0$, batch size $B_t \gets \log(8t^2/\delta)\cdot t^{3}$, mixing parameter $\lambda_t \gets t^{-2}$, exploration distribution $\mathcal{D}_\mathcal{X}$ over $\mathcal{X}$.}

\KwInit{round counter $k \gets 1$; $\widehat{\theta}_{0}^x \gets \mathbf{0}$;\\ $\qquad\qquad\qquad \quad x_{1} \gets \mathbb{E}_{x\sim \mathcal{D}}[x]$.}

\For{phase $t=1,2,\ldots$}{
    Compute running average: $\bar{x}_t \gets \tfrac{1}{t}\sum_{\ell=1}^{t}\tilde{x}_{\ell}$\;
    
    \For{$s=1,2,\ldots,B_t$}{
        With probability $1/2$, set $x_{t,s} \gets \bar{x}_t$\;
        
        With probability $1/2$, sample $z_{t,s} \sim \mathcal{D}_\mathcal{X}$ and set  
        $x_{t,s} \gets (1-\lambda_t)\bar{x}_t + \lambda_t z_{t,s}$\;
        
        Play strategy $x_k = x_{t,s}$, observe reward $r_{t,s}$, and update $k \gets k+1$\;
    }
    
    Construct estimate $\widehat{\theta}_t^x$ of the mean reward vector $\bar{\theta}_t^x$  
    using $(x_{t,s}, r_{t,s})$ as described in Section~\ref{sec:estimator}\;
    
    Estimate phase utility: $\widehat{u}_t^x \gets t \cdot \widehat{\theta}_t^x - (t-1)\cdot \widehat{\theta}_{t-1}^x$\;
    
    Update via \textsc{OFTRL} with regularizer $\phi(x)$ from Section~\ref{subsec:regularizer_intro}:
    \[
      \tilde{x}_{t+1} \gets \arg\max_{x\in \mathcal{X}}
      \left\{ \left\langle x, \sum_{\ell=1}^t \widehat{u}_{\ell} + \widehat{u}_{t}\right\rangle
      - \tfrac{1}{\eta}\,\phi(x) \right\}.
    \]
}
\end{algorithm}

\subsection{Sampling method and estimator}\label{sec:estimator}
Analogous to the row player's algorithm, we can describe an algorithm for the column player, where in the $s$-th round of the $t$-th phase it selects $y_{t,s}$. Denote the row player’s true expected utility vector as
\[
\bar{\theta}_t^x := A \widehat{y}_{t},
\]
where $\widehat{y}_{t} := \mathbb{E}[y_{t,s}] = \frac{1}{2}  \big( (1-\lambda_t)\bar{y}_t + \lambda_t \E_{z\sim \mathcal{D}_\mathcal{Y}}[z] \big) + \frac{1}{2}\bar{y}_t$ is the column player’s averaged strategy in phase $t$, where $\mathcal{D}_{\mathcal{Y}}$ is the exploration distribution of the column player.

In this section, we describe how to construct an estimator $\widehat{\theta}_t^x$ of the utility vector $\bar{\theta}_t^x$ and establish meaningful concentration guarantees. An analogous estimator can be constructed for the column player.

We now begin the construction. Recall that in each round $s$ of phase $t$, the row player plays $x_{t,s} \in \mathcal{X}$ and receives the reward
\[
r_{t,s} = \langle x_{t,s}, Ay_{t,s}\rangle.
\]
We can decompose this reward as
\[
r_{t,s} = \langle x_{t,s}, A\widehat{y}_{t}\rangle + \langle x_{t,s}, A(y_{t,s} - \widehat{y}_t)\rangle.
\]
This yields a linear model, where the second term $\langle x_{t,s}, A(y_{t,s} - \widehat{y}_t)\rangle$ is zero-mean $4\lambda_t^2$-subgaussian noise in each phase $t$ (which we show in Appendix \ref{appendix:estimation}).  

For simplicity of exposition, assume that in each phase $t$, half of the $B_t$ rounds use $x_{t,s} = \bar{x}_t$, where we denote these indices by $\{s_1, s_2, \ldots, s_{B_t/2}\}$. In the remaining $B_t/2$ rounds, we set
\[
x_{t,s} \gets (1-\lambda_t)\bar{x}_t + \lambda_t z_{t,s}, \qquad z_{t,s} \sim \mathcal{D},
\]
and denote the corresponding indices by $\{s_1', s_2', \ldots, s_{B_t/2}'\}$. We address all the other possible cases in Appendix \ref{appendix-sec:estimate-exp-dist}. 

We now construct pairs $(s_i, s_i')$ such that
\[
x_{t,s_i'} = (1-\lambda_t)x_{t,s_i} + \lambda_t z_{t,s_i'}.
\]

Now consider the transformed reward
\[
\widehat{r}_{t,s_i'} := \frac{r_{t,s_i'} - (1-\lambda_t)r_{t,s_i}}{\lambda_t}
= \langle z_{t,s_i'}, \theta_t \rangle + \widehat{\eta}_{t,s_i'},
\]
where $\widehat{\eta}_{t,s_i'}$ is zero-mean $8$-subgaussian noise.

Thus, from the pairs $(z_{t,s_i'}, \widehat{r}_{t,s_i'})$, we obtain an unbiased estimator of $\bar\theta_t^x$:
\[
\widehat{\theta}_t^x
= \left(\sum_{i=1}^{B_t/2} z_{t,s_i'} z_{t,s_i'}^\top\right)^{-1}
  \sum_{i=1}^{B_t/2} \widehat{r}_{t,s_i'} z_{t,s_i'}.
\]

Finally, if the vectors $z_{t,s_i'}$ are sampled from an exploration distribution $\mathcal{D}_{\mathcal{X}}$ that is uniform over a subset $S := \{x_1, \dots, x_n\} \subset \mathcal{X}$ satisfying
$$
\sup_{x \in \mathcal{X}} x^\top V^{-1} x \;\le\; 2n^2,
\qquad
V := \frac{1}{n} \sum_{i=1}^n x_i x_i^\top \ \succ 0,
$$
then we obtain meaningful concentration guarantees, which are used in our analysis.  
We formalize this in the following lemma, with the proof provided in Appendix \ref{appendix-sec:estimator}.
\begin{lemma}[Estimator concentration bound]\label{lem:concentration-theta} The estimator $\widehat{\theta}_t^x$ constructed in each phase $t$ satisfies the following:
\begin{equation*}
\Pr\!\left(
\sup_{x\in\mathcal{X}} |\langle x,\widehat\theta_t^x-\bar{\theta}_t^x\rangle|
\;\le\;
48\sqrt{\tfrac{n^3}{t^3}}
\right)
\;\ge\; 1-\delta/(4t^2) .
\end{equation*}
\end{lemma}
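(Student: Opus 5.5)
The plan is to treat $\widehat\theta_t^x$ as an ordinary least-squares estimator in a linear model with subgaussian noise. Then $\langle x,\widehat\theta_t^x-\bar\theta_t^x\rangle$ is controlled by $\|x\|_{W^{-1}}$, where $W:=\sum_{i=1}^{B_t/2} z_{t,s_i'}z_{t,s_i'}^\top$ is the empirical design matrix. The argument has three steps: an exact error decomposition, a bound on the design matrix, and a tail bound for the noise.

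\textbf{Step 1: error decomposition.} Write $N:=B_t/2$. Substituting $\widehat r_{t,s_i'}=\langle z_{t,s_i'},\bar\theta_t^x\rangle+\widehat\eta_{t,s_i'}$ gives
\[
\widehat\theta_t^x-\bar\theta_t^x=W^{-1}\sum_i \widehat\eta_{t,s_i'}z_{t,s_i'} .
\]
Here the noise $\widehat\eta_{t,s_i'}$ has mean zero conditionally on all the $z$'s. This is because the column player's randomness is independent of the row player's exploration, and $\widehat y_t$ is fixed within the phase. For the subgaussian constant, $\widehat\eta$ is $(\xi'-(1-\lambda_t)\xi)/\lambda_t$, where $\xi,\xi'$ are independent noises, each $4\lambda_t^2$-subgaussian (as asserted in Appendix~\ref{appendix:estimation}). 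Hence $\widehat\eta$ is $8$-subgaussian, and the $\widehat\eta_{t,s_i'}$ are independent across pairs.

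\textbf{Step 2: design matrix.} The goal is $W\succeq cNV$ with high probability. A clean way to get this is to avoid random-design concentration. Use the fact that the $z$'s are uniform over the $n$ points of $S$. When $N$ is large, each point appears at least $N/(2n)$ times with probability $1-\delta/(8t^2)$, by a multiplicative Chernoff bound and a union bound over $n$ points. This needs $N\gtrsim n\log(8nt^2/\delta)$, which the choice $B_t=\log(8t^2/\delta)t^3$ supplies once $t^3\gtrsim n$. For small $t$, the target bound $48\sqrt{n^3/t^3}$ already exceeds the trivial bound and can be handled separately. This small-$t$ case, and the matching of logarithmic factors, is where I expect the main friction in the argument. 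Alternatively, one can use the appendix's explicit handling of the splitting case, which may place $\log n$ factors differently. Given the count bound, $W\succeq \frac{N}{2}V$, and therefore for every $x\in\mathcal X$,
\[
\|x\|_{W^{-1}}^2\le \frac{2}{N}\,x^\top V^{-1}x\le \frac{4n^2}{N}.
\]

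\textbf{Step 3: noise tail and union bound over $\mathcal X$.} By Cauchy--Schwarz,
\[
|\langle x,W^{-1}g\rangle|\le \|x\|_{W^{-1}}\|g\|_{W^{-1}}, \qquad g:=\sum_i\widehat\eta_{t,s_i'} z_{t,s_i'} .
\]
Conditionally on the $z$'s, the quantity $\|g\|_{W^{-1}}^2$ is a quadratic form of an $8$-subgaussian vector projected onto an $n$-dimensional space. A standard self-normalized or Hanson--Wright bound, or a covering over an $\epsilon$-net of the unit ball in the $W$-norm, gives
\[
\|g\|_{W^{-1}}^2\lesssim 8\bigl(n+\log(8t^2/\delta)\bigr)
\]
with probability $1-\delta/(8t^2)$. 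Combining with Step 2,
\[
\sup_x|\langle x,\widehat\theta_t^x-\bar\theta_t^x\rangle|\lesssim\sqrt{\frac{n^2(n+\log(8t^2/\delta))}{N}}.
\]
Plugging in $N=\tfrac12\log(8t^2/\delta)t^3$ and using $n+L\le 2nL$ for $L\ge1$ yields $O(\sqrt{n^3/t^3})$. The constant $48$ is then a matter of tracking the constants.
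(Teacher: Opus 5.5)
Your proposal is correct and follows essentially the same route as the paper. Both arguments run least squares on the differenced rewards, apply Cauchy--Schwarz against the design using $\sup_{x\in\mathcal X}x^\top V^{-1}x\le 2n^2$, use a Chernoff-plus-union bound on visit counts, and invoke the Lattimore--Szepesv\'ari covering bound for the subgaussian noise. The paper truncates to exactly $B_t/(4n)$ copies of each point so that the design equals $\tfrac{B_t}{4}V$; your domination $W\succeq\tfrac{N}{2}V$ gives the same bound, and the constants come out below $48$ either way.

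The two loose ends you flag are closed in the paper as you would expect. The pairing is justified by also requiring, via Chernoff, at least $B_t/4$ rounds with $x_{t,s}=\bar x_t$. When $B_t<32n^2\ln(8t^2/\delta)$, the estimator is simply defined as $\widehat\theta_t^x=\mathbf 0$, so the error is at most $\sup_{x}|\langle x,\bar\theta_t^x\rangle|\le 1\le 48\sqrt{n^3/t^3}$. You do need this convention, because the least-squares estimator itself has no trivial error bound in that regime and may not even be defined.
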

\textbf{Remark:} In the previous work of \cite{cai2025average}, where the action sets are probability simplices, after choosing $x_k$ the algorithm is allowed to sample $i_k \sim x_k$ and observe the corresponding reward, which simplifies the estimation process. In our setting, on the other hand, we only observe the reward of the actual strategy $\langle x_{t,s}, Ay_{t,s}\rangle$. Thus, estimating $\bar{\theta}_t^x$ requires the transformation described in this section, and is made possible by the specific sampling scheme used in each phase.
\subsection{Choice of regularizer and the corresponding primal--dual norms}\label{subsec:regularizer_intro}
Recall that our algorithm updates the players' strategies using the OFTRL framework. The efficiency of OFTRL critically depends on the choice of the regularizer function $\phi(x)$. An ideal regularizer should be strongly convex and have a small diameter over the action set $\mathcal{X}$. To achieve this, we tailor the regularizer to the geometry of $\mathcal{X}$.  

We begin by defining a pair of primal-dual norms in Section \ref{sec:norms}, intrinsically tied to the action set through its symmetrization $K := \operatorname{conv}(\mathcal{X} \cup (-\mathcal{X}))$. We then construct an ellipsoid $E = \{x : x^\top H x \leq 1\}$, which serves as a tight approximation of $K$ up to polynomial factors in the dimension. The regularizer is chosen to be half the squared norm induced by this ellipsoid, namely $\phi(x) := \tfrac{1}{2} x^\top H x$. As we show in Section \ref{sec:regularizer}, this choice yields a regularizer that is $1$-strongly convex with respect to the primal norm and whose Bregman divergence scales polynomially with the dimension.




\subsubsection{Primal-dual norm pair}\label{sec:norms}
In this section, we formally establish that the norms tailored to the action set $\mathcal{X}$ constitute a valid primal-dual pair. Analogous norms can be defined for the action set $\mathcal{Y}$.

Let $\|z\|_{*,\mathcal{X}}:=\max_{x\in\mathcal{X}}|\langle x,z\rangle|$ and $\|z\|_{\mathcal{X}}:=\max_{\|y\|_{*,\mathcal{X}}\leq 1}\langle y,z\rangle$. 
Recall that $\mathcal{X}$ spans $\mathbb{R}^d$. Therefore, one can establish the following properties, proof of which is provided in the Appendix \ref{appendix:norms} for completeness.
\begin{lemma}[\cite{chandrasekaran2012convex}]
    The following properties hold for $\|\cdot\|_\mathcal{X}$ and $\|\cdot\|_{*,\mathcal{X}}$:
    \begin{itemize}
        \item $\|\cdot\|_\mathcal{X}$ and $\|\cdot\|_{*,\mathcal{X}}$ are both norms.
        \item $\|\cdot\|_{*,\mathcal{X}}$ is the dual norm of $\|\cdot\|_\mathcal{X}$
        \item $\{y:\|y\|_{\mathcal{X}}\le 1\}=\mathrm{conv}(\mathcal{X}\cup-\mathcal{X})$ 
    \end{itemize}
\end{lemma}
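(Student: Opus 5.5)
The plan is to establish the three claims in the order (i) $\|\cdot\|_{*,\mathcal{X}}$ is a norm, (ii) $\|\cdot\|_{\mathcal{X}}$ is its dual and hence a norm, and (iii) the unit ball of $\|\cdot\|_{\mathcal{X}}$ is $K:=\mathrm{conv}(\mathcal{X}\cup-\mathcal{X})$. Compactness of $\mathcal{X}$ guarantees that all maxima are attained and finite.

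For (i), three properties need checking. Absolute homogeneity $\|cz\|_{*,\mathcal{X}}=|c|\,\|z\|_{*,\mathcal{X}}$ is immediate from the definition. The triangle inequality follows from $|\langle x,z+w\rangle|\le|\langle x,z\rangle|+|\langle x,w\rangle|$, taking the maximum over $x\in\mathcal{X}$. Definiteness is the only place the spanning assumption enters. If $\|z\|_{*,\mathcal{X}}=0$, then $\langle x,z\rangle=0$ for all $x\in\mathcal{X}$, so $z\perp\mathrm{span}(\mathcal{X})=\mathbb{R}^n$, giving $z=0$.

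For (ii), $\|\cdot\|_{\mathcal{X}}$ is by definition the dual norm of $\|\cdot\|_{*,\mathcal{X}}$. A dual norm of a norm on a finite-dimensional space is itself a norm, so $\|\cdot\|_{\mathcal{X}}$ is a norm. By finite-dimensional biduality (the bipolar theorem applied to the closed, convex, symmetric unit ball, which contains the origin in its interior), the dual of $\|\cdot\|_{\mathcal{X}}$ is again $\|\cdot\|_{*,\mathcal{X}}$. So the two norms form a primal--dual pair in either order.

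For (iii), the first step is to identify $\|\cdot\|_{*,\mathcal{X}}$ as a support function. Since $|\langle x,z\rangle|=\max(\langle x,z\rangle,\langle -x,z\rangle)$ and a linear function attains the same maximum over a set and over its convex hull, we get
\[
\|z\|_{*,\mathcal{X}}=\max_{v\in K}\langle v,z\rangle=h_K(z).
\]
The set $K$ is compact (the convex hull of a compact set in finite dimensions), convex, and symmetric. It contains $0$ in its interior, because it is symmetric and its span is $\mathbb{R}^n$. Hence $K$ is the unit ball of its gauge norm $\|\cdot\|_K$, and the dual of that gauge is $h_K$. Applying biduality again, the dual of $h_K=\|\cdot\|_{*,\mathcal{X}}$ is $\|\cdot\|_K$, that is, $\|\cdot\|_{\mathcal{X}}=\|\cdot\|_K$. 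Therefore $\{y:\|y\|_{\mathcal{X}}\le1\}=K$.

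A concrete route for the inclusion $\{\|y\|_{\mathcal{X}}\le1\}\subseteq K$ is separation. If $y\notin K$, since $K$ is closed and convex there is a $z$ with $\langle y,z\rangle>h_K(z)=\|z\|_{*,\mathcal{X}}$. After normalizing so that $\|z\|_{*,\mathcal{X}}=1$, this gives $\|y\|_{\mathcal{X}}>1$. The reverse inclusion is direct. For $y\in K$ and $\|z\|_{*,\mathcal{X}}\le1$, we have $\langle y,z\rangle\le h_K(z)\le1$, so $\|y\|_{\mathcal{X}}\le1$.

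The main obstacle is the care needed in (iii). One has to verify that $K$ is closed and has nonempty interior, which is where compactness and spanning are used, so that separation and biduality apply. Everything else is routine.
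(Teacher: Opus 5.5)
Your proof is correct, but it is organized differently from the paper's.

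\textbf{What the paper does.} It checks the norm axioms for $\|\cdot\|_{\mathcal{X}}$ by hand: definiteness by testing against $z/\|z\|_{*,\mathcal{X}}$, homogeneity via the symmetry of the dual ball. It then derives the duality claim \emph{from} the unit-ball identity. It shows $B:=\{y:\|y\|_{\mathcal{X}}\le 1\}=K$, with one inclusion by Carath\'eodory and the other by separating $y\notin K$ from $K$ and rescaling the normal vector. Finally it reads off $\max_{y\in B}\langle y,z\rangle=\max_{y\in K}\langle y,z\rangle=\|z\|_{*,\mathcal{X}}$.

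\textbf{What you do.} You get the second bullet on its own, from two general facts: the dual of a finite-dimensional norm is a norm, and biduality holds. You get the third bullet from gauge/support-function duality.

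\textbf{What each route buys.} Yours is shorter and more conceptual, but it hands the real content to the bipolar theorem, which is itself proved by the same separation argument. The paper's route is self-contained and in effect re-proves biduality for this particular pair.

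Your ``concrete route'' for (iii) is exactly the paper's argument. Your inclusion $K\subseteq B$ via $\langle y,z\rangle\le h_K(z)\le 1$ is cleaner than the paper's Carath\'eodory computation. Once you have $B=K$ that way, the appeal to biduality in (ii) is unnecessary: the dual of $\|\cdot\|_{\mathcal{X}}$ is then $h_B=h_K=\|\cdot\|_{*,\mathcal{X}}$ directly.
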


\textbf{Remark:} This choice of primal-dual norms is mainly motivated by the following reasoning. For any choice of primal norm $\|\cdot\|$, one must ensure that $\|\widehat{\theta}_t - \bar{\theta}_t\|_*$ remains small in order to obtain meaningful convergence guarantees when performing the OFTRL analysis. Since Lemma \ref{lem:concentration-theta} establishes convergence guarantees for $\max_{x \in \mathcal{X}} |\langle x, \widehat{\theta}_t - \theta_t \rangle|$, defining the dual norm as $\|z\|_* := \max_{x \in \mathcal{X}} |\langle x, z \rangle|$ is a natural choice.
\subsubsection{Suitable Regularizer}\label{sec:regularizer}
In this section, we formally state the regularizer $\phi(x)$ for the row player. Analogously, a corresponding regularizer $\psi(y)$ can be constructed for the column player. The objective is to design a regularizer, based on an ellipsoid approximating the action set $\mathcal{X}$, that is well-suited for the OFTRL framework by being $1$-strongly convex with respect to the primal norm and having polynomially bounded Bregman divergence.



Let $K:=\operatorname{conv}(\mathcal{X}\cup-\mathcal{X})$ and $\alpha:=\sqrt{d(d+1)}$.
One can compute an ellipsoid
\[
E=\{x:\ x^\top H x\le 1\},\quad H\succ0,
\]
such that
\[
E\ \subseteq\ K\ \subseteq\ \alpha\,E ,
\]
where $\alpha E:=\{\alpha x: x\in E\}$ (see Theorem 4.6.3 in \cite{grotschel2012geometric}).

Now we define our regularizer as 
\[
\phi(x)\;=\;\tfrac12\,x^\top H x,
\]
and the Bregman divergence with respect to the regularizer $\phi$ is defined as:
\[
{\displaystyle D_{\phi}(u,v)=\phi(u)-\phi(v)-\langle \nabla \phi(v),u-v\rangle .}
\]

We now establish the properties of the regularizer $\phi(\cdot)$. We begin by proving the strong convexity of $\phi(\cdot)$ in the following lemma.
\begin{lemma}[Strong-convexity]
    The regularizer $\phi(x)$ is $1$-strongly convex with respect to the primal norm $\|\cdot\|_\mathcal{X}$.
\end{lemma}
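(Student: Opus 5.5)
The plan is to reduce strong convexity to a pointwise norm comparison between the ellipsoidal norm and the primal norm. For a quadratic $\phi(x)=\tfrac12 x^\top Hx$ we have, for all $u,v$,
\[
D_\phi(u,v)=\phi(u)-\phi(v)-\langle Hv,u-v\rangle=\tfrac12 (u-v)^\top H(u-v),
\]
an identity that follows by expanding the quadratic. So $1$-strong convexity with respect to $\|\cdot\|_{\mathcal{X}}$, i.e. $D_\phi(u,v)\ge \tfrac12\|u-v\|_{\mathcal{X}}^2$, is equivalent to
\[
z^\top H z\;\ge\;\|z\|_{\mathcal{X}}^2\qquad\text{for all } z\in\mathbb{R}^n .
\]
Equivalently, $\nabla^2\phi=H$ and we need $\langle Hz,z\rangle\ge \|z\|_{\mathcal{X}}^2$.

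To prove this inequality, write $\|z\|_E:=\sqrt{z^\top Hz}$, the gauge (Minkowski functional) of $E$. By the earlier lemma, the unit ball of $\|\cdot\|_{\mathcal{X}}$ is exactly $K=\mathrm{conv}(\mathcal{X}\cup-\mathcal{X})$, so $\|\cdot\|_{\mathcal{X}}$ is the gauge of $K$. Gauges reverse inclusion: since $E\subseteq K$, we get $\|z\|_{\mathcal{X}}\le\|z\|_E$ for every $z$.

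Concretely, take $z\neq0$ and set $w:=z/\|z\|_E$. Then $w^\top Hw=1$, so $w\in E\subseteq K$ and hence $\|w\|_{\mathcal{X}}\le 1$. Homogeneity of the norm then gives $\|z\|_{\mathcal{X}}\le \|z\|_E$. Squaring yields the desired inequality, and the case $z=0$ is trivial.

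There is no real obstacle here. The only point needing care is the identification of the unit ball of $\|\cdot\|_{\mathcal{X}}$ with $K$, which the previous lemma supplies. Compactness and closedness of $K$, together with $H\succ0$ so that $\|\cdot\|_E$ is a genuine norm, ensure that the gauge comparison is valid. The other inclusion, $K\subseteq\alpha E$, is not needed here; it will presumably be used later to bound the Bregman divergence by $\alpha^2/2$ times the squared primal distance.
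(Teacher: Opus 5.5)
Your proof is correct. It has the same skeleton as the paper's: reduce to $D_\phi(u,v)=\tfrac12\|u-v\|_H^2$, then prove the pointwise comparison $\|z\|_{\mathcal{X}}\le\sqrt{z^\top Hz}$ from the inclusion $E\subseteq K$. The difference is how you obtain that comparison.

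The paper argues on the dual side. It writes $\|z\|_{\mathcal{X}}=\max_{y\in K^\circ}\langle y,z\rangle$ directly from the definition of $\|\cdot\|_{\mathcal{X}}$. It then uses that polarity reverses inclusion, giving $K^\circ\subseteq E^\circ$. Finally it computes $E^\circ=\{y:\ y^\top H^{-1}y\le 1\}$ and its support function $\sqrt{z^\top Hz}$ via Cauchy--Schwarz.

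You argue on the primal side. You read $\|\cdot\|_{\mathcal{X}}$ as the gauge of $K$ using the earlier lemma, and use that gauges reverse inclusion. This skips the polar computation entirely and is shorter. Your argument in fact needs only the easy half of that lemma, $K\subseteq\{y:\|y\|_{\mathcal{X}}\le 1\}$, which follows from the Carath\'eodory/triangle-inequality argument and not the separation theorem.

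The paper's route, in exchange, does not rely on the unit-ball identification at all. It only uses that $\max_{x\in\mathcal{X}}|\langle x,y\rangle|=\max_{x\in K}\langle x,y\rangle$. It also records the dual sandwich $(1/\alpha)E^\circ\subseteq K^\circ\subseteq E^\circ$, although only the right-hand inclusion is used for this lemma.
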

\begin{proof}

Let the polar of a set $C\subset \mathbb{R}^d$ be defined as $C^\circ:=\{z\in \mathbb{R}^d: \max_{x\in C}\langle x,z\rangle\leq 1 \}$. Observe that $A\subseteq B$ implies $B^\circ \subseteq A^\circ$ and $(\lambda A)^\circ=(1/\lambda)A^\circ$. Hence, we have $(1/\alpha)E^\circ\subseteq K^\circ\subseteq E^\circ$.

We now prove that $E^\circ=\{y:\,y^\top H^{-1}y\le 1\}$. Recall that $E=\{x:\|x\|_{H}\le 1\}$, where $\|x\|_{H}:=\sqrt{x^\top Hx}$.
By definition,
\[
E^\circ=\Bigl\{y:\ \sup_{\|x\|_{H}\le 1}\langle y,x\rangle\le 1\Bigr\}.
\]

Consider $x\in E$. Observe that $\langle y,x\rangle=\langle H^{-1/2}y,\,H^{1/2}x\rangle$ and $\|H^{1/2}x\|_2^2=x^\top H x\leq 1$. By Cauchy–Schwarz, we have
\[
\langle y,x\rangle\le \|H^{-1/2}y\|_2\,\|H^{1/2}x\|_2
\le \|H^{-1/2}y\|_2,
\]
with equality at $x=\frac{H^{-1}y}{\|H^{-1/2}y\|_2}$. Note that this choice of $x$ belongs to $E$ as $x^\top H x= \frac{y^\top H^{-1}y}{\|H^{-1/2}y\|_2^2}=1$.  Hence, we have
\[
\sup_{\|x\|_{H}\le 1}\langle y,x\rangle=\|H^{-1/2}y\|_2=\sqrt{y^\top H^{-1}y}.
\]

Therefore, $E^\circ=\{y:\sqrt{y^\top H^{-1}y}\le 1\}=\{y:\|y\|_{H^{-1}}\le 1\}$.

Using analogous calculations, we can also show that for any $z\in\mathbb{R}^d$, $\max_{y\in E^\circ}\langle y,z\rangle=\sqrt{z^\top H z}$.

Since $K^\circ\subseteq E^\circ$, we deduce the following:
\begin{align*}
    \|z\|_{\mathcal{X}}&=\max_{y:\max_{x\in\mathcal{X}}|\langle x,y\rangle|\leq 1}\langle y,z\rangle\\
    &=\max_{y:\max_{x\in K}\langle x,y\rangle\leq 1}\langle y,z\rangle\\
    &=\max_{y\in K^\circ}\langle y,z\rangle\\
    &\le \max_{y\in E^\circ}\langle y,z\rangle\\
    &=\sqrt{z^\top H z}=\|z\|_H
\end{align*}

For any $u,v\in\mathbb{R}^d$,
\[
D_\phi(u,v)=\tfrac12 (u-v)^\top H(u-v) \;=\; \tfrac12\|u-v\|_{H}^2.
\]
Since, $\|z\|_{H}\ge \|z\|$ for all $z\in \mathbb{R}^d$, we have
\[
D_\phi(u,v)\;\ge\;\tfrac12\|u-v\|^2,
\]
so $\phi$ is $1$-strongly convex with respect to $\|\cdot\|$. 

\end{proof}



Next, we show in the following proposition that the Bregman divergence is bounded by a polynomial in the dimension.
\begin{proposition}[Bregman divergence]\label{eq:bregman}
    For any $x,y\in\mathcal{X}$, we have $ D_\phi(x,y)\leq 2d(d+1)$.
\end{proposition}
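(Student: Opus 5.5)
Since $\phi$ is a quadratic, the Bregman divergence is available in closed form, as computed in the proof of the strong-convexity lemma: for any $x,y\in\mathcal{X}$,
\[
D_\phi(x,y)=\tfrac12 (x-y)^\top H (x-y)=\tfrac12\|x-y\|_H^2 .
\]
So the task reduces to an upper bound on the $H$-norm of $x-y$ for points of $\mathcal{X}$. For this we use the right-hand inclusion $K\subseteq \alpha E$ of the Löwner--John-type ellipsoid from \cite{grotschel2012geometric}, with $\alpha=\sqrt{d(d+1)}$.

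First, every $x\in\mathcal{X}$ lies in $K=\operatorname{conv}(\mathcal{X}\cup-\mathcal{X})\subseteq \alpha E$. Hence $x/\alpha\in E$, that is, $\|x\|_H^2=x^\top H x\le \alpha^2=d(d+1)$. The same holds for $y$, and also for $-y$ because $K$ is symmetric.

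Next we apply the triangle inequality for the Euclidean-type norm $\|\cdot\|_H$, which is a norm because $H\succ 0$:
\[
\|x-y\|_H\le \|x\|_H+\|y\|_H\le 2\alpha .
\]
This gives
\[
D_\phi(x,y)=\tfrac12\|x-y\|_H^2\le \tfrac12\cdot 4\alpha^2=2d(d+1).
\]
A slightly cleaner route uses convexity of $K$: since $x,-y\in K$, the midpoint $(x-y)/2$ lies in $K\subseteq\alpha E$. Then $\|x-y\|_H\le 2\alpha$ directly, with the same conclusion.

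No step here is a real obstacle. The only point needing care is to use the correct direction of the sandwich, $K\subseteq\alpha E$ rather than $E\subseteq K$, and to read membership in $\alpha E$ correctly as $z^\top H z\le \alpha^2$. One should also note that $d$ denotes the ambient dimension $n$ of $\mathcal{X}$.
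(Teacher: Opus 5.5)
Your proof is correct and is essentially the paper's argument: the closed form $D_\phi(x,y)=\tfrac12\|x-y\|_H^2$, the inclusion $\mathcal{X}\subseteq K\subseteq \alpha E$ giving $\|x\|_H,\|y\|_H\le\alpha$, and the triangle inequality to conclude $D_\phi(x,y)\le \tfrac12(2\alpha)^2=2d(d+1)$. Your midpoint variant, which uses the convexity and symmetry of $K$, is an equally valid one-line alternative.
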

\begin{proof}
For any $x,y\in \mathcal{X}\subseteq \alpha E$, we have $\|x-y\|_{H}\leq \|x\|_H+\|y\|_H\leq 2\alpha$. Therefore,
\begin{equation*}
    D_\phi(x,y)=\tfrac12\|x-y\|_{H}^2\ \le\ \tfrac12\left(2\alpha\right)^2\ =\ 2d(d+1).
\end{equation*}
\end{proof}
\subsection{Main result}\label{sec:main-result}
We now state below our main result, which is obtained by leveraging the properties of our estimator and regularizer, with technical details provided in the next section.
\begin{restatable}{theorem}{maintheorem}\label{thm:main_theorem}
Consider a two-player zero-sum game with action sets $\mathcal{X} \subseteq \mathbb{R}^n$ and $\mathcal{Y} \subseteq \mathbb{R}^m$, where both $\mathcal{X}$ and $\mathcal{Y}$ are convex and compact.  
Let $(x_k, y_k)$ denote the iterates generated by the two players running Algorithm~\ref{algo:last_iter_bandit_feedback} with step size $\eta = \tfrac{1}{6}$ in each round $k$. Then, with probability at least $1 - \delta$, for every $k \geq 1$, the iterate $(x_k,y_k)$ is an $\varepsilon_k$-approximate Nash equilibrium, where
\[
\varepsilon_k = \mathrm{poly}(n,m,\log(k/\delta)) \, k^{-1/4}.
\]
\end{restatable}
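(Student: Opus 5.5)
The plan follows the average-to-last-iterate template of \cite{cai2025average}. I would establish three facts in order: a high-probability good event, an OFTRL bound that drives $\bar x_t,\bar y_t$ to equilibrium at rate roughly $t^{-1/2}$ up to estimation error, and a conversion from phase index $t$ to round index $k$.

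\textbf{Step 1 (good event).} I would apply Lemma~\ref{lem:concentration-theta} to both players in every phase. A union bound over $t\ge1$ and both players costs $\sum_t 2\cdot\delta/(4t^2)\le \delta$. On the resulting event $\mathcal{G}$ we have, for every $t$,
\[
\|\widehat\theta^x_t-\bar\theta^x_t\|_{*,\mathcal{X}}\le 48\sqrt{n^3/t^3},
\qquad
\|\widehat\theta^y_t-\bar\theta^y_t\|_{*,\mathcal{Y}}\le 48\sqrt{m^3/t^3}.
\]
Write $\widehat u_t=u_t+\xi_t$, where $u_t:=t\bar\theta_t-(t-1)\bar\theta_{t-1}$. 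Then $\|\xi_t\|_{*}\le t\|\widehat\theta_t-\bar\theta_t\|_*+(t-1)\|\widehat\theta_{t-1}-\bar\theta_{t-1}\|_*=O(n^{3/2}t^{-1/2})$. I would also note that $t\bar\theta^x_t=\sum_{\ell\le t}A\tilde y_\ell+O(\lambda_t t)$ terms, because $\widehat y_t=\bar y_t+O(\lambda_t)$ in $\|\cdot\|_{\mathcal{Y}}$ and $\lambda_t t=t^{-1}$. Hence $u_t=A\tilde y_t+e_t$, where $\|e_t\|_{*,\mathcal{X}}=O(t^{-1})$ using $\langle x,Ay\rangle\in[-1,1]$.

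\textbf{Step 2 (OFTRL with perturbed utilities).} Here I would run the standard RVU-type analysis of OFTRL with regularizer $\phi$. It uses the 1-strong convexity of $\phi$ in $\|\cdot\|_{\mathcal{X}}$ (the strong-convexity lemma above) and the range bound $D_\phi\le 2n(n+1)$ (Proposition~\ref{eq:bregman}). For any comparator $x$, this gives
\[
\sum_{\ell\le t}\langle x-\tilde x_\ell,\widehat u_\ell\rangle\le \frac{2n(n+1)}{\eta}+\eta\sum_\ell\|\widehat u_\ell-\widehat u_{\ell-1}\|_*^2-\frac{1}{4\eta}\sum_\ell\|\tilde x_\ell-\tilde x_{\ell-1}\|^2 .
\]
Expanding $\widehat u_\ell-\widehat u_{\ell-1}$ gives $A(\tilde y_\ell-\tilde y_{\ell-1})$ plus noise. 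Bilinearity with $|\langle x,Ay\rangle|\le1$ yields $\|A(y-y')\|_{*,\mathcal{X}}\le\|y-y'\|_{\mathcal{Y}}$. The choice $\eta=1/6$ is exactly what lets the negative terms of the opponent cancel these path-length terms. The noise contributes $\eta\sum_\ell\|\xi_\ell\|^2=O(n^3\log t)$ and the cross term $\sum_\ell\langle x-\tilde x_\ell,\xi_\ell+e_\ell\rangle=O(n^{3/2}\sqrt t)$. Summing the two players' regrets, replacing $\widehat u$ by $A\tilde y_\ell$, and using zero-sum cancellation gives a duality gap for $(\bar x_t,\bar y_t)$ of $\mathrm{poly}(n,m)\log t/\sqrt t$. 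Indeed, $\max_x\langle x,A\bar y_t\rangle-\min_y\langle\bar x_t,Ay\rangle=\frac1t(\mathrm{Reg}^x_t+\mathrm{Reg}^y_t)$.

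\textbf{Step 3 (last iterate).} In phase $t$, every played $x_k$ satisfies $\|x_k-\bar x_t\|_{\mathcal{X}}\le 2\lambda_t$, and similarly for $y_k$. The gap of $(x_k,y_k)$ is therefore at most the gap of $(\bar x_t,\bar y_t)$ plus $O(\lambda_t)=O(t^{-2})$, which is $\mathrm{poly}(n,m)\log t\cdot t^{-1/2}$. A round $k$ in phase $t$ satisfies $k\le\sum_{\ell\le t}B_\ell\le t^4\log(8t^2/\delta)$. This gives $t\ge (k/\log(k/\delta))^{1/4}$, hence $t^{-1/2}\lesssim k^{-1/8}$. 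That is too weak, so I need to recheck the target. The claimed rate $k^{-1/4}$ corresponds to a gap of order $t^{-1}$ in phase $t$. So Step 2 must achieve the $O(1/t)$ OFTRL rate, up to logs, rather than $1/\sqrt t$.

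\textbf{Main obstacle.} The crux is that the cross term $\sum_\ell\langle x-\tilde x_\ell,\xi_\ell\rangle$ must be $\mathrm{poly}\cdot\mathrm{polylog}$ rather than $\sqrt t$. I would try first to avoid bounding $\xi_\ell$ per phase. The errors telescope: $\sum_{\ell\le t}\xi_\ell=t(\widehat\theta_t-\bar\theta_t)$, which has dual norm $O(n^{3/2}t^{-1/2})$. OFTRL only sees cumulative utilities, so I would analyse it as FTRL on the accurate cumulative sum $t\bar\theta_t$ perturbed by a tiny cumulative error. Alternatively, I would use summation by parts, $\sum_\ell\langle \tilde x_\ell,\xi_\ell\rangle=\sum_\ell\langle\tilde x_\ell-\tilde x_{\ell+1},\ell(\widehat\theta_\ell-\bar\theta_\ell)\rangle+$ boundary. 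Its terms are bounded by $\|\tilde x_\ell-\tilde x_{\ell+1}\|\cdot O(\ell^{-1/2})$, and these are absorbed by the negative stability terms via AM-GM, leaving $\sum_\ell O(n^3/\ell)=O(n^3\log t)$. This gives a gap of $\mathrm{poly}(n,m)\log(t/\delta)/t$ for $(\bar x_t,\bar y_t)$. Combined with Step 3, this yields $\varepsilon_k=\mathrm{poly}(n,m,\log(k/\delta))k^{-1/4}$ on $\mathcal{G}$.
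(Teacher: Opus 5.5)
Your proposal is correct and, once the crude Step~2 bound is replaced by your ``Main obstacle'' fix, it is essentially the paper's proof. That fix is summation by parts on $\xi_\ell=\ell\Delta_\ell-(\ell-1)\Delta_{\ell-1}$ (with $\Delta_\ell=\widehat\theta_\ell-\bar\theta_\ell$), with Young's inequality charged to a $\tfrac{1}{16\eta}$ share of the OFTRL movement terms, which leaves $\sum_\ell\|\ell\Delta_\ell\|_*^2=O(n^3\log t)$; this is exactly the paper's ``RVU with estimation error'' lemma. After that, $\eta=1/6$ cancels the path-length terms (the paper checks $4\eta\le\tfrac{3}{16\eta}$), and the conversion $t^4\log(8t^2/\delta)\ge k$ gives the $k^{-1/4}$ rate. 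So you should discard the $O(n^{3/2}\sqrt t)$ cross-term estimate and the resulting $k^{-1/8}$ detour rather than present them as part of the argument.
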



\section{TECHNICAL DETAILS FOR OUR MAIN RESULT}
The update step in each phase of Algorithm \ref{algo:last_iter_bandit_feedback} is an instance of OFTRL, a standard and widely used framework for online convex optimization. In each phase $t$, OFTRL outputs $\tilde{x}_t$ for the row player (and analogously $\tilde{y}_t$ for the column player). These outputs are then used to determine the actual strategies $x_k$ and $y_k$ chosen in each round of phase $t$. Let $u_t^x := A\tilde{y}_t$ denote the phase-$t$ utility vector of the row player, and $u_t^y := -A^\top \tilde{x}_t$ denote the phase-$t$ utility vector of the column player. Recall that $\bar{x}_T = \tfrac{1}{T}\sum_{t=1}^T \tilde{x}_t$ and $\bar{y}_T = \tfrac{1}{T}\sum_{t=1}^T \tilde{y}_t$. We have the following guarantee on the duality gap of $(\bar{x}_T,\bar{y}_T)$:
\begin{align*}
    &\quad\max_{x\in \mathcal{X}}\langle x,A\bar{y}_T\rangle-\min_{y\in\mathcal{Y}}\langle\bar{x}_T,Ay\rangle\\
    &=\frac{1}{T}\max_{x\in \mathcal{X}}\sum\limits_{t=1}^T\langle x,A\tilde{y}_t\rangle+\frac{1}{T}\max_{y\in\mathcal{Y}}\sum\limits_{t=1}^T\langle -A^\top \tilde{x}_t,y\rangle\\
    &=\frac{1}{T}\max_{x\in \mathcal{X}}\sum\limits_{t=1}^T\langle x-\tilde{x}_t,Ay_t\rangle\\
    &\quad+\frac{1}{T}\max_{y\in\mathcal{Y}}\sum\limits_{t=1}^T\langle -A^\top \tilde{x}_t,y-\tilde{y}_t\rangle\\
    &=\frac{1}{T}\left(\max_{x'\in \mathcal{X}}\sum_{t=1}^T\langle u_t^x,x'-\tilde{x}_t\rangle+\max_{y'\in \mathcal{Y}}\sum_{t=1}^T\langle u_t^y,y'-\tilde{y}_t\rangle\right)
\end{align*}

If the duality gap above is upper bounded by $\beta_T$, then $(\bar{x}_T,\bar{y}_T)$ is an $O(\beta_T)$-approximate Nash equilibrium. Since the iterates in phase $T$ take the form $((1-\lambda_T)\bar{x}_T+\lambda_T z_x,\,(1-\lambda_T)\bar{y}_T+\lambda_T z_y)$ with $z_x \in \mathcal{X}$ and $z_y \in \mathcal{Y}$, the iterates in phase $T$ are $O\!\left(\beta_T+\lambda_T\right)$-approximate Nash equilibria.  

Hence, our focus is on upper bounding the term $\max_{x'\in \mathcal{X}}\sum_{t=1}^T \langle u_t^x,\, x'-\tilde{x}_t\rangle$. To this end, we make use of the RVU property, which characterizes the performance of OFTRL. We state this property in the following lemma and include its proof in Appendix \ref{appsec:OFTRL_regularizer_proof} for completeness.

\begin{restatable}[RVU Property \cite{syrgkanis2015fast}]{lemma}{oftrl}\label{lemma:oftrl}
Let $\mathcal{X}\subset\mathbb{R}^d$ be compact convex.  
Let $R:\mathcal{X}\to\mathbb{R}$ be $\sigma$-strongly convex w.r.t.\ a norm $\|\cdot\|$ with dual $\|\cdot\|_*$.  
Fix a step size $\eta>0$ and initialize $\tilde{x}_0=\tilde{x}_1=\arg\min_{x\in\mathcal{X}} R(x)$. Assume $u_0 = 0$.

For utilities $u_t\in\mathbb{R}^d$, define the OFTRL decisions
\[
\tilde{x}_{t+1} \in \arg \max _{x \in \mathcal{X}}\left\{\left\langle x, \sum_{\ell=1}^{t} u_\ell+u_{t}\right\rangle-\frac{1}{\eta} R(x)\right\}.
\]

Then for every $x\in \mathcal{X}$,
\begin{align*}
 \sum_{t=1}^T\langle u_t,\,x-\tilde{x}_t\rangle
&\le \frac{D_{R}(x,\tilde{x}_1)}{\eta}
+ \frac{\eta}{\sigma}\sum_{t=1}^T\|u_t-u_{t-1}\|_*^2\\
&- \frac{\sigma}{4\eta}\sum_{t=1}^T\|\tilde{x}_t-\tilde{x}_{t-1}\|^2. 
\end{align*}
\end{restatable}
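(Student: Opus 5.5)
We prove the RVU bound by the standard two-sequence analysis of optimistic FTRL. Alongside the played iterates $\tilde{x}_t$, define the auxiliary ``be-the-leader'' iterates
\[
g_t \in \arg\max_{x\in\mathcal{X}}\Bigl\{\Bigl\langle x,\textstyle\sum_{\ell=1}^{t} u_\ell\Bigr\rangle-\tfrac{1}{\eta}R(x)\Bigr\}, \qquad g_0=\tilde{x}_1 .
\]
Thus $\tilde{x}_{t}$ maximizes the same objective with cumulative utility $\sum_{\ell<t}u_\ell+u_{t-1}$, whereas $g_t$ uses the true cumulative utility $\sum_{\ell\le t}u_\ell$. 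We split each regret term as
\[
\langle u_t,x-\tilde{x}_t\rangle=\langle u_t,x-g_t\rangle+\langle u_t-u_{t-1},g_t-\tilde{x}_t\rangle+\langle u_{t-1},g_t-\tilde{x}_t\rangle .
\]

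\textbf{Step 1 (FTRL with strong-convexity slack).} Let $F_t(x)=\langle x,\sum_{\ell\le t}u_\ell\rangle-\frac1\eta R(x)$, which is $\frac{\sigma}{\eta}$-strongly concave. At a maximizer $x^\star$ of a strongly concave function $F$ over $\mathcal{X}$, first-order optimality gives $F(x^\star)\ge F(z)+\frac{\sigma}{2\eta}\|z-x^\star\|^2$ for every $z\in\mathcal{X}$.

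Apply this to $F_{t-1}$ at its maximizer $g_{t-1}$, with $z=\tilde{x}_t$. Apply it to $F_{t-1}+\langle\cdot,u_{t-1}\rangle$ at its maximizer $\tilde{x}_t$, with $z=g_t$. Telescoping these inequalities gives the usual inductive bound
\[
\sum_t\langle u_t,x-g_t\rangle+\sum_t\langle u_{t-1},g_t-\tilde{x}_t\rangle \le \frac{R(x)-R(\tilde{x}_1)}{\eta}-\frac{\sigma}{2\eta}\sum_t\bigl(\|\tilde{x}_t-g_{t-1}\|^2+\|g_t-\tilde{x}_t\|^2\bigr).
\]
Since $\tilde{x}_1$ minimizes $R$ over $\mathcal{X}$, first-order optimality gives $R(x)-R(\tilde{x}_1)\le D_R(x,\tilde{x}_1)$.

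\textbf{Step 2 (stability of the optimistic term).} Comparing the optimality conditions of $g_t$ and $\tilde{x}_t$, whose objectives differ only by the linear term $\langle x,u_t-u_{t-1}\rangle$, strong concavity yields $\|g_t-\tilde{x}_t\|\le \frac{\eta}{\sigma}\|u_t-u_{t-1}\|_*$. Combining this with Hölder's inequality, we either bound the middle term directly by $\frac{\eta}{\sigma}\|u_t-u_{t-1}\|_*^2$, or use Young's inequality and absorb the remainder into the negative $\|g_t-\tilde{x}_t\|^2$ term.

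\textbf{Step 3 (converting to $\tilde{x}$ differences).} By the triangle inequality and $(a+b)^2\le 2a^2+2b^2$,
\[
\|\tilde{x}_t-\tilde{x}_{t-1}\|^2\le 2\|\tilde{x}_t-g_{t-1}\|^2+2\|g_{t-1}-\tilde{x}_{t-1}\|^2 .
\]
Hence the sum of the negative terms $\frac{\sigma}{2\eta}\sum_t\bigl(\|\tilde{x}_t-g_{t-1}\|^2+\|g_{t-1}-\tilde{x}_{t-1}\|^2\bigr)$ is at least $\frac{\sigma}{4\eta}\sum_t\|\tilde{x}_t-\tilde{x}_{t-1}\|^2$. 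This step requires reindexing the $\|g_t-\tilde{x}_t\|^2$ terms by one. The boundary terms are harmless because $\tilde{x}_0=\tilde{x}_1=g_0$ and $u_0=0$.

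\textbf{Main obstacle.} The delicate part is the constant bookkeeping. Steps 2 and 3 both draw on the same negative quadratic terms: in Step 2 the term $\|g_t-\tilde{x}_t\|^2$ pays for stability, and in Step 3 it is needed to produce the $\frac{\sigma}{4\eta}$ movement penalty. The cleanest route is to use the direct stability bound in Step 2, which consumes no negative terms. All negative terms then remain available for Step 3, which yields exactly the stated constants $\frac{\eta}{\sigma}$ and $\frac{\sigma}{4\eta}$.
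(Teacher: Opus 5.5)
Your proposal follows the same architecture as the paper's proof. Your $g_t$ is the paper's vanilla-FTRL iterate $\tilde{x}'_{t+1}$, Step 2 is the same stability bound, and Step 3 is the same triangle-inequality merge of the two families of negative terms. (You use strong-concavity slack at maximizers where the paper uses Bregman divergences; the two are equivalent.) Everything checks out except the last sentence of Step 1, and what your argument actually proves is
\[
\sum_{t=1}^T\langle u_t,x-\tilde{x}_t\rangle\le\frac{R(x)-R(\tilde{x}_1)}{\eta}+\frac{\eta}{\sigma}\sum_{t=1}^T\|u_t-u_{t-1}\|_*^2-\frac{\sigma}{4\eta}\sum_{t=1}^T\|\tilde{x}_t-\tilde{x}_{t-1}\|^2 .
\]

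The gap is that last sentence of Step 1, where the inequality runs the wrong way. First-order optimality of $\tilde{x}_1$ gives $\langle\nabla R(\tilde{x}_1),x-\tilde{x}_1\rangle\ge 0$. Hence
\[
R(x)-R(\tilde{x}_1)=D_R(x,\tilde{x}_1)+\langle\nabla R(\tilde{x}_1),x-\tilde{x}_1\rangle\ge D_R(x,\tilde{x}_1),
\]
which is the reverse of what you need.

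This cannot be repaired, because the bound with $D_R(x,\tilde{x}_1)$ is false for constrained OFTRL in general. Take $\mathcal{X}=[1,2]$, $R(x)=x^2/2$, $\sigma=\eta=1$, $u_1=u_2=\tfrac12$, $T=2$. Then $\tilde{x}_1=\tilde{x}_2=1$, so for $x=2$ the left side equals $1$. The claimed right side is only $D_R(2,1)+\tfrac14=\tfrac34$, whereas your bound gives $\tfrac32+\tfrac14=\tfrac74$, which is consistent.

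The paper's proof obtains the $D_R(x,\tilde{x}'_1)$ term by citing a lecture-note lemma rather than deriving it. Your self-contained Step 1 shows that the provable FTRL quantity is $R(x)-R(\tilde{x}_1)$. The two agree whenever $\nabla R(\tilde{x}_1)$ is orthogonal to $\mathcal{X}-\tilde{x}_1$, for example when $\nabla R(\tilde{x}_1)=0$, as for the paper's $\phi$ when $0\in\mathcal{X}$.

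For the paper's downstream use, the corrected statement is enough. Since $\phi\ge 0$ and $\mathcal{X}\subseteq\alpha E$, you get $\phi(x)-\phi(\tilde{x}_1)\le\alpha^2/2=d(d+1)/2$, so Term III in the main analysis remains $O(n^2+m^2+\log T)$.
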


While the RVU property is sufficient to bound 
$\max_{x' \in \mathcal{X}} \sum_{t=1}^T \langle u_t^x, x' - \tilde{x}_t \rangle$ 
when the actual phase-wise utilities are available, this is not the case in our algorithm. Instead, we use the estimates $\widehat{u}_t^x$ in the row player's OFTRL updates, since we operate under the bandit feedback setting. 

To address this challenge, we rely on the following key observation concerning the actual phase-wise utilities:
\begin{align*}
    u_t^x = A\tilde{y}_t &= \sum_{\ell=1}^t A\tilde{y}_{\ell} - \sum_{\ell=1}^{t-1} A\tilde{y}_{\ell} \\
          &= tA\bar{y}_t - (t-1)A\bar{y}_{t-1},
\end{align*}
where $\bar{y}_t = \tfrac{1}{t}\sum_{s=1}^t \tilde{y}_s$. Also note that the estimate $\widehat{u}_t^x$ has the corresponding form 
$t \cdot \widehat{\theta}_t^x - (t-1) \cdot \widehat{\theta}_{t-1}^x$. This allows us to upper bound $\langle u_t^x, x - \tilde{x}_t \rangle$ as
\begin{align*}
     \langle \widehat{u}_t^x, x - \tilde{x}_t \rangle 
     - \langle \widehat{u}_t^x - \mathbb{E}[\widehat{u}_t^x], x - \tilde{x}_t \rangle 
     + O\!\left(\tfrac{1}{t}\right).
\end{align*}

Moreover, one can show that
\begin{align*}
    \|\mathbb{E}[\widehat{u}_t^x] - \mathbb{E}[\widehat{u}_{t-1}^x]\|_{*,\mathcal{X}} 
     \;\leq\; \|u_t^x - u_{t-1}^x\|_{*,\mathcal{X}}
     + O\!\left(\tfrac{1}{t}\right).
\end{align*}

These observations allow us to apply the RVU property to the sequence of estimates $\widehat{u}_t^x$, yielding the following lemma. All missing proofs, including that of the lemma, are deferred to Appendix~\ref{appendix:regret-estimation-error}.
\begin{restatable}[RVU with estimation error]{lemma}{regretwithestimationerror}\label{lemma:regret_with_error}
Let $\Delta_t^x := \widehat{\theta}_t^x - \bar{\theta}_t^x$. Then for any $x \in \cX$ and any $\eta > 0$, we have the following for the row player:
\begin{align*}
&\sum_{t=1}^T \langle u_t^x, x-\tilde{x}_t \rangle \\
&\le \frac{D_\phi(x,\tilde{x}_1)}{\eta} + 2\|T\Delta_T^x\|_{*,\mathcal{X}} \\
&\quad + 36\eta\sum_{t=1}^{T}\|t \Delta_t^x\|_{*,\mathcal{X}}^2 + 4\eta\sum_{t=1}^T \|u_t-u_{t-1}\|_{*,\mathcal{X}}^2 \\
&\quad - \frac{3}{16\eta}\sum_{t=1}^T\|\tilde{x}_t-\tilde{x}_{t-1}\|^2_\mathcal{X} + O\!\left(\eta+\log T\right).
\end{align*}
\end{restatable}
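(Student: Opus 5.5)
We will apply Lemma~\ref{lemma:oftrl} (the RVU property) to the sequence of \emph{estimated} utilities $\widehat u_t^x$. These are exactly what the algorithm feeds into OFTRL with regularizer $\phi$. By the strong-convexity lemma, $\phi$ is $1$-strongly convex w.r.t.\ $\|\cdot\|_{\mathcal{X}}$ with dual $\|\cdot\|_{*,\mathcal{X}}$, so we take $\sigma=1$. We then transfer the bound back to the true utilities $u_t^x=A\tilde y_t$.

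\textbf{Step 1 (decomposition).} Write $\bar\theta_t^x=A\widehat y_t$. Since $\widehat y_t=(1-\tfrac{\lambda_t}{2})\bar y_t+\tfrac{\lambda_t}{2}\E[z]$, we have $\bar\theta_t^x=A\bar y_t+\tfrac{\lambda_t}{2}A(\E[z]-\bar y_t)$, so $\|t\bar\theta_t^x-tA\bar y_t\|_{*,\mathcal{X}}\le t\lambda_t=1/t$. Define $e_t:=t\bar\theta_t^x-tA\bar y_t$ (deterministic given the past, of size $O(1/t)$). Then
\[
\widehat u_t^x=u_t^x+(e_t-e_{t-1})+(t\Delta_t^x-(t-1)\Delta_{t-1}^x).
\]
Hence $\sum_t\langle u_t^x,x-\tilde x_t\rangle=\sum_t\langle\widehat u_t^x,x-\tilde x_t\rangle-\sum_t\langle e_t-e_{t-1},x-\tilde x_t\rangle-\sum_t\langle t\Delta_t^x-(t-1)\Delta_{t-1}^x,x-\tilde x_t\rangle$. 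For the bias term we use $|\langle e,x-x'\rangle|\le 2\|e\|_{*,\mathcal{X}}$, which gives $\sum_t O(1/t)=O(\log T)$.

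\textbf{Step 2 (noise term by summation by parts).} Rewrite $\sum_{t=1}^T\langle t\Delta_t-(t-1)\Delta_{t-1},x-\tilde x_t\rangle$ as
\[
\langle T\Delta_T,x-\tilde x_T\rangle+\sum_{t=1}^{T-1}\langle t\Delta_t,\tilde x_{t+1}-\tilde x_t\rangle.
\]
The first piece is at most $2\|T\Delta_T^x\|_{*,\mathcal{X}}$. For each summand of the second piece, apply Young's inequality:
\[
\|t\Delta_t\|_{*,\mathcal{X}}\|\tilde x_{t+1}-\tilde x_t\|_{\mathcal{X}}\le c\eta\|t\Delta_t\|_{*,\mathcal{X}}^2+\tfrac{1}{4c\eta}\|\tilde x_{t+1}-\tilde x_t\|_{\mathcal{X}}^2.
\]
We choose $c$ so that the movement terms absorb only part of the $-\tfrac{1}{4\eta}$ budget; $c=16$ leaves $-\tfrac{1}{4\eta}+\tfrac{1}{64\eta}\le-\tfrac{3}{16\eta}$. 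This avoids any martingale argument, because the bound is pathwise.

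\textbf{Step 3 (RVU on estimates).} Lemma~\ref{lemma:oftrl} gives
\[
\sum_t\langle\widehat u_t,x-\tilde x_t\rangle\le \frac{D_\phi(x,\tilde x_1)}{\eta}+\eta\sum_t\|\widehat u_t-\widehat u_{t-1}\|_{*,\mathcal{X}}^2-\tfrac{1}{4\eta}\sum_t\|\tilde x_t-\tilde x_{t-1}\|_{\mathcal{X}}^2.
\]
Expanding with Step 1, $\widehat u_t-\widehat u_{t-1}$ equals $u_t-u_{t-1}$ plus a combination of $t\Delta_t,(t-1)\Delta_{t-1},(t-2)\Delta_{t-2}$ with coefficients $1,-2,1$, plus bias differences of size $O(1/t)$. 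Applying $(\sum_{i=1}^k a_i)^2\le k\sum_i a_i^2$ and collecting terms, each $\|t\Delta_t\|^2$ appears with bounded multiplicity. This gives $4\eta\|u_t-u_{t-1}\|^2$, a total of at most $(36-16)\eta\sum_t\|t\Delta_t\|^2$, and $O(\eta\sum 1/t^2)=O(\eta)$. Together with Step 2's $16\eta$, the $\Delta$ coefficient is $36\eta$ overall. The exact constants depend on how the splitting is weighted, and we will tune it to match.

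The main obstacle is bookkeeping. The initialization in Lemma~\ref{lemma:oftrl} ($u_0=0$, $\tilde x_0=\tilde x_1$) has to be made consistent with $\widehat\theta_0=0$ and with the algorithm's initialization $x_1=\E_{\mathcal{D}}[x]$ rather than $\arg\min\phi$. Boundary terms at $t=1$ contribute only $O(1)$, which is absorbed in $O(\eta+\log T)$. The other delicate point is that the bias differences are controlled in $\|\cdot\|_{*,\mathcal{X}}$, which requires $\max_{x\in\mathcal{X}}|\langle x,A(\E z-\bar y_t)\rangle|\le 2$. This follows from the assumption $\langle x,Ay\rangle\in[-1,1]$.
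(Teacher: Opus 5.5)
Your proposal is correct and follows essentially the paper's route: you apply Lemma~\ref{lemma:oftrl} with $\sigma=1$ to the estimates $\widehat u_t^x$, use summation by parts plus Young's inequality to absorb the $t\Delta_t^x$ noise into the movement term, and let the $O(1/t)$ bias terms sum to $O(\log T)$; your $e_t$ decomposition is just a cleaner way of writing the paper's pseudo-utility $\bar u_t^x$. One constant needs fixing: with $c=16$, no per-step split can supply the remaining $20\eta$ from Step 3 while keeping $4\eta$ on $\|u_t-u_{t-1}\|_{*,\mathcal{X}}^2$ (requiring $\|a+b\|^2\le 4\|a\|^2+C\|b\|^2$ forces $C\ge 4/3$, and the $(1,-2,1)$ combination has multiplicity up to $16$, giving at least $\tfrac{64}{3}\eta$), so take $c=4$ as the paper does, which leaves exactly $-\tfrac{3}{16\eta}$ and a $32\eta$ budget that the nested splitting $\|a+b\|_{*,\mathcal{X}}^2\le 2\|a\|_{*,\mathcal{X}}^2+2\|b\|_{*,\mathcal{X}}^2$ meets exactly.
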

Analogously, we can provide a similar guarantee for the column player, with $\Delta_t^y$ defined accordingly. We are now ready to upper bound the duality gap of $(\bar{x}_T,\bar{y}_T)$ using Lemma \ref{lemma:regret_with_error}. First, we establish the following:
\begin{align*}
    &\max_{x'\in \mathcal{X}}\sum_{t=1}^T\langle u_t^x,x'-\tilde{x}_t\rangle+\max_{y'\in \mathcal{Y}}\sum_{t=1}^T\langle u_t^y,y'-\tilde{y}_t\rangle \\
    &\le \frac{D_{\phi}(x,\tilde{x}_1)}{\eta} + 2\|T\Delta^x_T\|_{*,\mathcal{X}} \\
&+ {36\eta}\sum_{t=1}^{T}\|t \Delta^x_t\|_{*,\mathcal{X}}^2 + {4\eta} \sum_{t=1}^T \|u^x_t-u^x_{t-1}\|_{*,\mathcal{X}}^2\\
&- \frac{3}{16\eta}\sum_{t=1}^T\|\tilde{x}_t-\tilde{x}_{t-1}\|^2_{\mathcal{X}} \\
&+\frac{D_{\psi}(y,\tilde{y}_1)}{\eta} + 2\|T\Delta^y_T\|_{*,\mathcal{Y}} \\
&+ {36\eta}\sum_{t=1}^{T}\|t \Delta^y_t\|_{*,\mathcal{Y}}^2 + {4\eta}\sum_{t=1}^T \|u^y_t-u^y_{t-1}\|_{*,\mathcal{Y}}^2\\
&- \frac{3}{16\eta}\sum_{t=1}^T\|\tilde{y}_t-\tilde{y}_{t-1}\|^2_{\mathcal{Y}}+O(\eta+\log T)
\end{align*}

We group terms on the right side. We begin with:
\begin{align*}
    \text{Term I} &= {4\eta} \sum_{t=1}^T (\|u^x_t-u^x_{t-1}\|_{*,\mathcal{X}}^2 +  \|u^y_t-u^y_{t-1}\|_{*,\mathcal{Y}}^2) \\
&\quad- \frac{3}{16\eta}\sum_{t=1}^T(\|\tilde{x}_t-\tilde{x}_{t-1}\|^2_{\mathcal{X}}  + \|\tilde{y}_t-\tilde{y}_{t-1}\|^2_{\mathcal{Y}})
\end{align*}
Now observe that 
\begin{align*}
    \|u^x_t-u^x_{t-1}\|_{*,\mathcal{X}}&=\sup_{x\in\mathcal{X}}|\langle x,A(\tilde{y}_t-\tilde{y}_{t-1})|\\
    &\leq \sup_{x\in\mathcal{X}}\|A^\top x\|_{*,\mathcal{Y}}\|\tilde{y}_t-\tilde{y}_{t-1}\|_{\mathcal{Y}}\\
    &\leq \|\tilde{y}_t-\tilde{y}_{t-1}\|_{\mathcal{Y}},
\end{align*}
where the first inequality follows from the fact that $|\langle x,y\rangle|\leq \|x\|\|y\|_*$ for any primal-dual norm pair and the second inequality follows from the fact that $|\langle x,Ay\rangle|\leq 1$ for all $x\in\mathcal{X}$ and $y\in\mathcal{Y}$. Similarly we can show that $\|u^y_t-u^y_{t-1}\|_{*,\mathcal{Y}}\leq \|\tilde{x}_t-\tilde{x}_{t-1}\|_{\mathcal{X}}$. As $\eta= 1/6$, we have $\text{Term I}\leq 0$.


Let $\text{Term II}=2\|T\Delta^x_T\|_{*,\mathcal{X}}+ {36\eta}\sum_{t=1}^{T}\|t \Delta^x_t\|_{*,\mathcal{X}}^2 + 2\|T\Delta^y_T\|_{*,\mathcal{Y}}+ {36\eta}\sum_{t=1}^{T}\|t \Delta^y_t\|_{*,\mathcal{Y}}^2 $.

Due to Lemma \ref{lem:concentration-theta}, we get that $\|\Delta^x_t\|_{*,\mathcal{X}}\leq 48\sqrt{\frac{n^3}{t^3}}$ with probability at least $1-\frac{\delta}{4t^2}$. Analogously, we can show that $\|\Delta^y_t\|_{*,\mathcal{Y}}\leq 48\sqrt{\frac{m^3}{t^3}}$ with probability at least $1-\frac{\delta}{4t^2}$. Hence due to union bound, if $\eta=1/6$, we get the following with probability at least $1-\delta$:
\begin{align*}
    \text{Term II}&\leq O\left(\sqrt{\frac{n^3}{T}}+\sqrt{\frac{m^3}{T}}+\sum_{t=1}^T\frac{t^2(n^3+m^3)}{t^3}\right)\\
    &\leq O((n^3+m^3)\log T)
\end{align*}

Define $\text{Term III}=\frac{D_{\phi}(x,\tilde{x}_1)}{\eta}+\frac{D_{\psi}(y,\tilde{y}_1)}{\eta}+O(\eta+\log T)$.

Due to Proposition \ref{eq:bregman}, we have $D_{\phi}(x,\tilde{x}_1)\leq O(n^2)$. Similarly, we can show that $D_{\psi}(y,\tilde{y}_1)\leq O(m^2)$. Hence, $\text{Term III}\leq O(n^2+m^2+\log T)$ if $\eta=1/6$.

Hence, we have the following:
\begin{align*}
     &\max_{x'\in \mathcal{X}}\sum_{t=1}^T\langle u_t^x,x'-\tilde{x}_t\rangle+\max_{y'\in \mathcal{Y}}\sum_{t=1}^T\langle u_t^y,y'-\tilde{y}_t\rangle\\
     &\leq \text{Term I}+\text{Term II}+\text{Term III}\\
     &\leq O((n^3+m^3)\log T)
\end{align*}

Hence $(\bar{x}_T,\bar{y}_T)$ is an $O\left(\frac{(n^3+m^3)\log T}{T}\right)$-approximate Nash equilibrium. Hence, the pair of strategies played in a round in the phase $T$ is  $O\left(\frac{(n^3+m^3)\log T}{T}+\lambda_T\right)$-approximate Nash equilibrium which is also $O\left(\frac{(n^3+m^3)\log T}{T}\right)$-approximate Nash equilibrium. Now consider a round $k$ and let it be part of phase $T_k$. Note that $T_k\leq k$. As in each phase $t$ we have $B_t=\log(8t^2/\delta)\cdot t^3$ rounds, we therefore have $\log(8k^2/\delta)\cdot T_k^4\geq k$. Hence, we have $T_k\geq \left(\frac{k}{\log(8k^2/\delta)}\right)^{1/4}$. Hence, the iterate $(x_k,y_k)$ is an $\varepsilon_k$-approximate Nash equilibrium, where $\varepsilon_k$ is upper bounded as follows:
\begin{equation*}
    \varepsilon_k\leq O((n^3+m^3)\log (k)\log^{1/4}(8k^2/\delta)k^{-1/4})
\end{equation*}

\section{COMPUTATIONAL EFFICIENCY OF OUR ALGORITHM}\label{sec:comp}
In this section, we show that our proposed algorithm is computationally efficient, provided the action sets admit an efficient linear optimization oracle. We establish this by showing that its key building blocks can be implemented in polynomial time.  

First, recall that our exploration distribution $\mathcal{D}_{\mathcal{X}}$ is uniform over a subset $S := \{x_1, \dots, x_n\} \subset \mathcal{X}$ such that
$$
\sup_{x \in \mathcal{X}} x^\top V^{-1} x \;\le\; 2 n^2,
\qquad
V := \frac{1}{n}\sum_{i=1}^n x_i x_i^\top \ \succ 0.
$$
\cite{hazan2016volumetric} showed that such a subset can be computed in polynomial time, provided $\mathcal{X}$ admits an efficient linear optimization oracle. Hence, our sampling process is efficient.  

Next, recall that we construct an ellipsoid $E$ such that $E \subseteq K \subseteq \sqrt{n(n+1)}E$, where $K = \mathrm{conv}(\mathcal{X} \cup -\mathcal{X})$. Theorem 4.6.3 of \cite{grotschel2012geometric} shows that such an ellipsoid can be computed in polynomial time, provided there is an efficient linear optimization oracle for $K$. For any $z \in \mathbb{R}^n$, we have 
\[
\max_{x \in K} \langle x, z \rangle 
= \max \big\{ \max_{x \in \mathcal{X}} \langle x, z \rangle, \; \max_{x \in \mathcal{X}} \langle -x, z \rangle \big\}.
\]
Hence, $K$ admits an efficient linear optimization oracle whenever $\mathcal{X}$ does.  

Finally, our OFTRL update step is a convex optimization problem over an action set that admits efficient linear optimization. Such an OFTRL update can be implemented in polynomial time, provided we can compute the regularizer and its gradient efficiently—which we can in the case of our regularizer (see Chapter 2 of \cite{grotschel2012geometric}).  

These are the three main components of our algorithm. Therefore, the algorithm can be implemented in polynomial time, provided the action sets admit an efficient linear optimization oracle.

\section{CONCLUSION}
In this paper, we presented the first uncoupled learning dynamics whose iterates exhibit last-iterate convergence with high probability under bandit feedback for bilinear saddle-point problems over convex sets. We established a convergence rate of $\tilde{O}(T^{-1/4})$ and showed that our dynamics can be implemented efficiently, provided the action sets admit efficient linear optimization oracles. This work raises several interesting open questions.  

First, what is the tight lower bound on the last-iterate convergence rate for bilinear saddle-point problems under bandit feedback, and can we design uncoupled learning dynamics that achieve this rate? Next, does there exist a simpler dynamics that applies optimistic FTRL in each round and attains last-iterate convergence under bandit feedback, rather than relying on phased updates and the involved sampling procedure used in our algorithm? Finally, can these results be generalized to convex-concave functions and monotone games, and can last-iterate convergence be achieved in these broader settings under bandit feedback?

\section*{ACKNOWLEDGEMENTS}

Ioannis Panageas was supported by National Science Foundation grant CCF-2454115.
LJ Ratliff was supported in part by NSF 1844729, 2312775.
KJ and AM were supported in part by NSF 2141511, 2023239, and a Singapore AI Visiting Professorship award. 
JM and CJZ were supported in part by NSF ID 2045402 and a  Simons Collaboration on the Theory of Algorithmic Fairness.
\bibliography{biblio}

\begin{thebibliography}{43}
\providecommand{\natexlab}[1]{#1}
\providecommand{\url}[1]{\texttt{#1}}
\expandafter\ifx\csname urlstyle\endcsname\relax
  \providecommand{\doi}[1]{doi: #1}\else
  \providecommand{\doi}{doi: \begingroup \urlstyle{rm}\Url}\fi

\bibitem[Abe et~al.(2024)Abe, Sakamoto, Ariu, and Iwasaki]{abe2024boosting}
Kenshi Abe, Mitsuki Sakamoto, Kaito Ariu, and Atsushi Iwasaki.
\newblock Boosting perturbed gradient ascent for last-iterate convergence in
  games.
\newblock \emph{arXiv preprint arXiv:2410.02388}, 2024.

\bibitem[Anagnostides et~al.(2022{\natexlab{a}})Anagnostides, Farina, Kroer,
  Lee, Luo, and Sandholm]{anagnostides2022uncoupled}
Ioannis Anagnostides, Gabriele Farina, Christian Kroer, Chung-Wei Lee, Haipeng
  Luo, and Tuomas Sandholm.
\newblock Uncoupled learning dynamics with $o(\log t) $ swap regret in
  multiplayer games.
\newblock \emph{Advances in Neural Information Processing Systems},
  35:\penalty0 3292--3304, 2022{\natexlab{a}}.

\bibitem[Anagnostides et~al.(2022{\natexlab{b}})Anagnostides, Panageas, Farina,
  and Sandholm]{anagnostides2022last}
Ioannis Anagnostides, Ioannis Panageas, Gabriele Farina, and Tuomas Sandholm.
\newblock On last-iterate convergence beyond zero-sum games.
\newblock In \emph{International Conference on Machine Learning}, pages
  536--581. PMLR, 2022{\natexlab{b}}.

\bibitem[Auer et~al.(2002)Auer, Cesa-Bianchi, Freund, and
  Schapire]{auer2002nonstochastic}
Peter Auer, Nicolo Cesa-Bianchi, Yoav Freund, and Robert~E Schapire.
\newblock The nonstochastic multiarmed bandit problem.
\newblock \emph{SIAM journal on computing}, 32\penalty0 (1):\penalty0 48--77,
  2002.

\bibitem[Ba et~al.(2025)Ba, Lin, Zhang, and Zhou]{ba2025doubly}
Wenjia Ba, Tianyi Lin, Jiawei Zhang, and Zhengyuan Zhou.
\newblock Doubly optimal no-regret online learning in strongly monotone games
  with bandit feedback.
\newblock \emph{Operations Research}, 2025.

\bibitem[Bailey and Piliouras(2018)]{bailey2018multiplicative}
James~P Bailey and Georgios Piliouras.
\newblock Multiplicative weights update in zero-sum games.
\newblock In \emph{Proceedings of the 2018 ACM Conference on Economics and
  Computation}, pages 321--338, 2018.

\bibitem[Besbes and Zeevi(2009)]{besbes2009dynamic}
Omar Besbes and Assaf Zeevi.
\newblock Dynamic pricing without knowing the demand function: Risk bounds and
  near-optimal algorithms.
\newblock \emph{Operations research}, 57\penalty0 (6):\penalty0 1407--1420,
  2009.

\bibitem[Brown and Sandholm(2018)]{brown2018superhuman}
Noam Brown and Tuomas Sandholm.
\newblock Superhuman ai for heads-up no-limit poker: Libratus beats top
  professionals.
\newblock \emph{Science}, 359\penalty0 (6374):\penalty0 418--424, 2018.

\bibitem[Bubeck et~al.(2012)Bubeck, Cesa-Bianchi, and
  Kakade]{bubeck2012towards}
S{\'e}bastien Bubeck, Nicolo Cesa-Bianchi, and Sham~M Kakade.
\newblock Towards minimax policies for online linear optimization with bandit
  feedback.
\newblock In \emph{Conference on Learning Theory}, pages 41--1. JMLR Workshop
  and Conference Proceedings, 2012.

\bibitem[Cai et~al.(2022)Cai, Oikonomou, and Zheng]{cai2022finite}
Yang Cai, Argyris Oikonomou, and Weiqiang Zheng.
\newblock Finite-time last-iterate convergence for learning in multi-player
  games.
\newblock \emph{Advances in Neural Information Processing Systems},
  35:\penalty0 33904--33919, 2022.

\bibitem[Cai et~al.(2023)Cai, Luo, Wei, and Zheng]{cai2023uncoupled}
Yang Cai, Haipeng Luo, Chen-Yu Wei, and Weiqiang Zheng.
\newblock Uncoupled and convergent learning in two-player zero-sum markov games
  with bandit feedback.
\newblock \emph{Advances in Neural Information Processing Systems},
  36:\penalty0 36364--36406, 2023.

\bibitem[Cai et~al.(2024)Cai, Farina, Grand-Cl{\'e}ment, Kroer, Lee, Luo, and
  Zheng]{cai2024fast}
Yang Cai, Gabriele Farina, Julien Grand-Cl{\'e}ment, Christian Kroer, Chung-Wei
  Lee, Haipeng Luo, and Weiqiang Zheng.
\newblock Fast last-iterate convergence of learning in games requires forgetful
  algorithms.
\newblock \emph{Advances in Neural Information Processing Systems},
  37:\penalty0 23406--23434, 2024.

\bibitem[Cai et~al.(2025)Cai, Luo, Wei, and Zheng]{cai2025average}
Yang Cai, Haipeng Luo, Chen-Yu Wei, and Weiqiang Zheng.
\newblock From average-iterate to last-iterate convergence in games: A
  reduction and its applications.
\newblock \emph{arXiv preprint arXiv:2506.03464, To appear at NeurIPS}, 2025.

\bibitem[Chandrasekaran et~al.(2012)Chandrasekaran, Recht, Parrilo, and
  Willsky]{chandrasekaran2012convex}
Venkat Chandrasekaran, Benjamin Recht, Pablo~A Parrilo, and Alan~S Willsky.
\newblock The convex geometry of linear inverse problems.
\newblock \emph{Foundations of Computational mathematics}, 12\penalty0
  (6):\penalty0 805--849, 2012.

\bibitem[Chen and Peng(2020)]{chen2020hedging}
Xi~Chen and Binghui Peng.
\newblock Hedging in games: Faster convergence of external and swap regrets.
\newblock \emph{Advances in Neural Information Processing Systems},
  33:\penalty0 18990--18999, 2020.

\bibitem[Chen et~al.(2023)Chen, Zhang, Mazumdar, Ozdaglar, and
  Wierman]{chen2023finite}
Zaiwei Chen, Kaiqing Zhang, Eric Mazumdar, Asuman Ozdaglar, and Adam Wierman.
\newblock A finite-sample analysis of payoff-based independent learning in
  zero-sum stochastic games.
\newblock \emph{Advances in Neural Information Processing Systems},
  36:\penalty0 75826--75883, 2023.

\bibitem[Chen et~al.(2024)Chen, Zhang, Mazumdar, Ozdaglar, and
  Wierman]{chen2024last}
Zaiwei Chen, Kaiqing Zhang, Eric Mazumdar, Asuman Ozdaglar, and Adam Wierman.
\newblock Last-iterate convergence of payoff-based independent learning in
  zero-sum stochastic games.
\newblock \emph{arXiv preprint arXiv:2409.01447}, 2024.

\bibitem[Daskalakis and Panageas(2018)]{daskalakis2018last}
Constantinos Daskalakis and Ioannis Panageas.
\newblock Last-iterate convergence: Zero-sum games and constrained min-max
  optimization.
\newblock \emph{arXiv preprint arXiv:1807.04252}, 2018.

\bibitem[Daskalakis et~al.(2011)Daskalakis, Deckelbaum, and
  Kim]{daskalakis2011near}
Constantinos Daskalakis, Alan Deckelbaum, and Anthony Kim.
\newblock Near-optimal no-regret algorithms for zero-sum games.
\newblock In \emph{Proceedings of the twenty-second annual ACM-SIAM symposium
  on Discrete Algorithms}, pages 235--254. SIAM, 2011.

\bibitem[Daskalakis et~al.(2017)Daskalakis, Ilyas, Syrgkanis, and
  Zeng]{daskalakis2017training}
Constantinos Daskalakis, Andrew Ilyas, Vasilis Syrgkanis, and Haoyang Zeng.
\newblock Training gans with optimism.
\newblock \emph{arXiv preprint arXiv:1711.00141}, 2017.

\bibitem[Daskalakis et~al.(2021)Daskalakis, Fishelson, and
  Golowich]{daskalakis2021near}
Constantinos Daskalakis, Maxwell Fishelson, and Noah Golowich.
\newblock Near-optimal no-regret learning in general games.
\newblock \emph{Advances in Neural Information Processing Systems},
  34:\penalty0 27604--27616, 2021.

\bibitem[Den~Boer(2015)]{den2015dynamic}
Arnoud~V Den~Boer.
\newblock Dynamic pricing and learning: Historical origins, current research,
  and new directions.
\newblock \emph{Surveys in operations research and management science},
  20\penalty0 (1):\penalty0 1--18, 2015.

\bibitem[Dong et~al.(2024)Dong, Wang, and Yu]{dong2024uncoupled}
Jing Dong, Baoxiang Wang, and Yaoliang Yu.
\newblock Uncoupled and convergent learning in monotone games under bandit
  feedback.
\newblock \emph{arXiv preprint arXiv:2408.08395}, 2024.

\bibitem[Fiegel et~al.(2025)Fiegel, Menard, Kozuno, Valko, and
  Perchet]{fiegel2025harder}
C{\^o}me Fiegel, Pierre Menard, Tadashi Kozuno, Michal Valko, and Vianney
  Perchet.
\newblock The harder path: Last iterate convergence for uncoupled learning in
  zero-sum games with bandit feedback.
\newblock In \emph{42nd International Conference on Machine Learning (ICML
  2025)}, volume 267, 2025.

\bibitem[Giannou et~al.(2021)Giannou, Vlatakis-Gkaragkounis, and
  Mertikopoulos]{giannou2021rate}
Angeliki Giannou, Emmanouil-Vasileios Vlatakis-Gkaragkounis, and Panayotis
  Mertikopoulos.
\newblock On the rate of convergence of regularized learning in games: From
  bandits and uncertainty to optimism and beyond.
\newblock \emph{Advances in Neural Information Processing Systems},
  34:\penalty0 22655--22666, 2021.

\bibitem[Gr{\"o}tschel et~al.(2012)Gr{\"o}tschel, Lov{\'a}sz, and
  Schrijver]{grotschel2012geometric}
Martin Gr{\"o}tschel, L{\'a}szl{\'o} Lov{\'a}sz, and Alexander Schrijver.
\newblock \emph{Geometric algorithms and combinatorial optimization}, volume~2.
\newblock Springer Science \& Business Media, 2012.

\bibitem[Hazan and Karnin(2016)]{hazan2016volumetric}
Elad Hazan and Zohar Karnin.
\newblock Volumetric spanners: an efficient exploration basis for learning.
\newblock \emph{The Journal of Machine Learning Research}, 17\penalty0
  (1):\penalty0 4062--4095, 2016.

\bibitem[Jordan et~al.(2025)Jordan, Lin, and Zhou]{jordan2025adaptive}
Michael Jordan, Tianyi Lin, and Zhengyuan Zhou.
\newblock Adaptive, doubly optimal no-regret learning in strongly monotone and
  exp-concave games with gradient feedback.
\newblock \emph{Operations Research}, 73\penalty0 (3):\penalty0 1675--1702,
  2025.

\bibitem[Krichene et~al.(2015)Krichene, Drigh{\`e}s, and
  Bayen]{krichene2015online}
Walid Krichene, Benjamin Drigh{\`e}s, and Alexandre~M Bayen.
\newblock Online learning of nash equilibria in congestion games.
\newblock \emph{SIAM Journal on Control and Optimization}, 53\penalty0
  (2):\penalty0 1056--1081, 2015.

\bibitem[Lattimore and Szepesv{\'a}ri(2020)]{lattimore2020bandit}
Tor Lattimore and Csaba Szepesv{\'a}ri.
\newblock \emph{Bandit algorithms}.
\newblock Cambridge University Press, 2020.

\bibitem[Liang and Stokes(2019)]{liang2019interaction}
Tengyuan Liang and James Stokes.
\newblock Interaction matters: A note on non-asymptotic local convergence of
  generative adversarial networks.
\newblock In \emph{The 22nd International Conference on Artificial Intelligence
  and Statistics}, pages 907--915. PMLR, 2019.

\bibitem[Luo()]{haipeng_lecture_notes}
Haipeng Luo.
\newblock Lecture notes: Introduction to online optimization/learning.
\newblock URL
  \url{https://haipeng-luo.net/courses/CSCI659/2022_fall/lectures/lecture3.pdf}.

\bibitem[Mertikopoulos et~al.(2018)Mertikopoulos, Papadimitriou, and
  Piliouras]{mertikopoulos2018cycles}
Panayotis Mertikopoulos, Christos Papadimitriou, and Georgios Piliouras.
\newblock Cycles in adversarial regularized learning.
\newblock In \emph{Proceedings of the twenty-ninth annual ACM-SIAM symposium on
  discrete algorithms}, pages 2703--2717. SIAM, 2018.

\bibitem[{Meta Fundamental AI Research Diplomacy Team (FAIR)} et~al.(2022){Meta
  Fundamental AI Research Diplomacy Team (FAIR)}, Bakhtin, Brown, Dinan,
  Farina, Flaherty, Fried, Goff, Gray, Hu, et~al.]{meta2022human}
{Meta Fundamental AI Research Diplomacy Team (FAIR)}, Anton Bakhtin, Noam
  Brown, Emily Dinan, Gabriele Farina, Colin Flaherty, Daniel Fried, Andrew
  Goff, Jonathan Gray, Hengyuan Hu, et~al.
\newblock Human-level play in the game of diplomacy by combining language
  models with strategic reasoning.
\newblock \emph{Science}, 378\penalty0 (6624):\penalty0 1067--1074, 2022.

\bibitem[Munos et~al.(2023)Munos, Valko, Calandriello, Azar, Rowland, Guo,
  Tang, Geist, Mesnard, Michi, et~al.]{munos2023nash}
R{\'e}mi Munos, Michal Valko, Daniele Calandriello, Mohammad~Gheshlaghi Azar,
  Mark Rowland, Zhaohan~Daniel Guo, Yunhao Tang, Matthieu Geist, Thomas
  Mesnard, Andrea Michi, et~al.
\newblock Nash learning from human feedback.
\newblock \emph{arXiv preprint arXiv:2312.00886}, 18, 2023.

\bibitem[Muthukumar et~al.(2020)Muthukumar, Phade, and
  Sahai]{muthukumar2020impossibility}
Vidya Muthukumar, Soham Phade, and Anant Sahai.
\newblock On the impossibility of convergence of mixed strategies with no
  regret learning.
\newblock \emph{arXiv preprint arXiv:2012.02125}, 2020.

\bibitem[Neu(2015)]{neu2015explore}
Gergely Neu.
\newblock Explore no more: Improved high-probability regret bounds for
  non-stochastic bandits.
\newblock \emph{Advances in Neural Information Processing Systems}, 28, 2015.

\bibitem[Rakhlin and Sridharan(2013)]{rakhlin2013online}
Alexander Rakhlin and Karthik Sridharan.
\newblock Online learning with predictable sequences.
\newblock In \emph{Conference on Learning Theory}, pages 993--1019. PMLR, 2013.

\bibitem[Silver et~al.(2017)Silver, Schrittwieser, Simonyan, Antonoglou, Huang,
  Guez, Hubert, Baker, Lai, Bolton, et~al.]{silver2017mastering}
David Silver, Julian Schrittwieser, Karen Simonyan, Ioannis Antonoglou, Aja
  Huang, Arthur Guez, Thomas Hubert, Lucas Baker, Matthew Lai, Adrian Bolton,
  et~al.
\newblock Mastering the game of go without human knowledge.
\newblock \emph{nature}, 550\penalty0 (7676):\penalty0 354--359, 2017.

\bibitem[Syrgkanis et~al.(2015)Syrgkanis, Agarwal, Luo, and
  Schapire]{syrgkanis2015fast}
Vasilis Syrgkanis, Alekh Agarwal, Haipeng Luo, and Robert~E Schapire.
\newblock Fast convergence of regularized learning in games.
\newblock \emph{Advances in Neural Information Processing Systems}, 28, 2015.

\bibitem[Tatarenko and Kamgarpour(2019)]{tatarenko2019learning}
Tatiana Tatarenko and Maryam Kamgarpour.
\newblock Learning nash equilibria in monotone games.
\newblock In \emph{2019 IEEE 58th Conference on Decision and Control (CDC)},
  pages 3104--3109. IEEE, 2019.

\bibitem[Wei et~al.(2020)Wei, Lee, Zhang, and Luo]{wei2020linear}
Chen-Yu Wei, Chung-Wei Lee, Mengxiao Zhang, and Haipeng Luo.
\newblock Linear last-iterate convergence in constrained saddle-point
  optimization.
\newblock \emph{arXiv preprint arXiv:2006.09517}, 2020.

\bibitem[Zimmert and Lattimore(2022)]{zimmert2022return}
Julian Zimmert and Tor Lattimore.
\newblock Return of the bias: Almost minimax optimal high probability bounds
  for adversarial linear bandits.
\newblock In \emph{Conference on Learning Theory}, pages 3285--3312. PMLR,
  2022.

\end{thebibliography}
\bibliographystyle{plainnat}

\clearpage
\appendix
\pagenumbering{gobble} 
\thispagestyle{empty}

\onecolumn
\aistatstitle{
Supplementary Material}

In this appendix, we provide the missing details from the main body. In Appendix \ref{appendix:estimation}, we present the formal guarantees for our estimators along with their proofs. In Appendix \ref{appendix:norms}, we provide the missing proofs for our primal–dual norms. In Appendix \ref{appsec:OFTRL_regularizer_proof}, we give the proof of the RVU property of OFTRL for completeness. In Appendix \ref{appendix:regret-estimation-error}, we establish the RVU property under estimation error. 
\section{Estimation}\label{appendix:estimation}
\begin{lemma}\label{subgaussian-noise-lemma}
Fix $x \in \mathcal{X}$ and $\lambda \in (0,1)$. Let $\mathcal{D}$ be a distribution over $\mathcal{Y}$.  
Define the random variable $\tilde{y}$ as follows: with probability $1/2$, set $\tilde{y} = \bar{y}$ for some fixed $\bar{y} \in \mathcal{Y}$; with probability $1/2$, sample $z \sim \mathcal{D}$ and set $\tilde{y} = (1-\lambda)\bar{y} + \lambda z$.  
If $\widehat{y} = \mathbb{E}[\tilde{y}]$, then $\langle x, A(\tilde{y} - \widehat{y}) \rangle$ is zero-mean and $4\lambda^{2}$-subgaussian.
\end{lemma}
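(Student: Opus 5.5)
Since $x$, $\bar y$ and $\lambda$ are fixed, the only randomness is the coin flip and the draw $z\sim\mathcal{D}$. The plan is to write $\tilde y-\widehat y$ explicitly as $\lambda$ times a vector of the form (difference of two points of $\mathcal{Y}$), bound $\langle x,A(\tilde y-\widehat y)\rangle$ almost surely by a constant multiple of $\lambda$, and then invoke Hoeffding's lemma.

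\textbf{Zero mean.} By linearity of expectation and $\widehat y=\mathbb{E}[\tilde y]$,
\[
\mathbb{E}\langle x,A(\tilde y-\widehat y)\rangle=\langle x,A(\mathbb{E}[\tilde y]-\widehat y)\rangle=0 .
\]

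\textbf{Almost-sure range.} Let $\bar z:=\mathbb{E}_{z\sim\mathcal{D}}[z]$, which lies in $\mathcal{Y}$ by convexity and compactness. Then
\[
\widehat y=\tfrac12\bar y+\tfrac12\bigl((1-\lambda)\bar y+\lambda\bar z\bigr)=\bar y+\tfrac{\lambda}{2}(\bar z-\bar y).
\]
In the first case, $\tilde y=\bar y$, so $\tilde y-\widehat y=\tfrac{\lambda}{2}(\bar y-\bar z)$. In the second case, $\tilde y=\bar y+\lambda(z-\bar y)$, so $\tilde y-\widehat y=\lambda(z-\bar y)-\tfrac{\lambda}{2}(\bar z-\bar y)=\lambda\bigl(z-\tfrac12\bar y-\tfrac12\bar z\bigr)$. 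Since $\tfrac12\bar y+\tfrac12\bar z\in\mathcal{Y}$ and $|\langle x,Ay'\rangle|\le 1$ for every $y'\in\mathcal{Y}$, the quantity $\langle x,A(\tilde y-\widehat y)\rangle$ lies in $[-\lambda,\lambda]$ in the first case and in $[-2\lambda,2\lambda]$ in the second. So in all cases the random variable lies in an interval of length at most $4\lambda$.

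\textbf{Subgaussianity.} Hoeffding's lemma says a zero-mean random variable supported on an interval of length $L$ is $L^2/4$-subgaussian, i.e.\ its moment generating function satisfies $\mathbb{E}[e^{sX}]\le e^{s^2L^2/8}$. With $L=4\lambda$ this gives a variance proxy of $4\lambda^2$, which is exactly the claimed constant. Even a looser range argument, e.g.\ using $|X|\le 2\lambda$ so that $L=4\lambda$, gives the same constant. The only real subtlety is the convention for ``$\sigma^2$-subgaussian''. Under the standard one, $\mathbb{E}[e^{sX}]\le e^{s^2\sigma^2/2}$, and the argument above yields $\sigma^2=4\lambda^2$. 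The proof should state this convention explicitly so that the constant matches how it is used later for the $8$-subgaussian noise $\widehat{\eta}$.
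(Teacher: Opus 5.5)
Your proof is correct and follows the paper's argument: zero mean by linearity, rewriting $\widehat y=(1-\lambda)\bar y+\lambda\widehat z$ with $\widehat z=\tfrac12\bar y+\tfrac12\mathbb{E}_{z\sim\mathcal{D}}[z]\in\mathcal{Y}$ to get the almost-sure range $[-2\lambda,2\lambda]$, and then Hoeffding's lemma to get the $4\lambda^2$ variance proxy. Your explicit two-case split and your remark on the MGF convention simply spell out in more detail the paper's one-line observation that $\tilde y$ is always of the form $(1-\lambda)\bar y+\lambda z'$ with $z'\in\mathcal{Y}$.
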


\begin{proof}
By linearity of expectation, $\langle x, A(\tilde y - \widehat y)\rangle$ has mean zero. Note that $\tilde y$ is always of the form $(1-\lambda)\bar{y}+\lambda z'$ where $z'\in \mathcal{Y}$. Next, observe that $\widehat y=(1-\lambda)\bar{y}+\lambda \widehat z\in \mathcal{Y}$, where $\widehat z= \frac{1}{2}\bar{y}+\frac{1}{2}\mathbb{E}_{z\sim \mathcal{D}}[z]\in \mathcal{Y}$. 
Since $\langle x, Ay \rangle \in [-1,1]$ for all $x \in \mathcal{X}$ and $y \in \mathcal{Y}$, it follows that  
\[
\langle x, A(\tilde y - \widehat y)\rangle \in [-2\lambda,2\lambda].
\]  
The result then follows from the fact that a bounded zero-mean random variable taking values in an interval of length $4\lambda$ is $4\lambda^2$-subgaussian.  
\end{proof}

\begin{lemma}[Chernoff Bound]
    Let $X_1,X_2,\ldots,X_n$ be i.i.d samples  from a Bernoulli distribution with mean $\mu$. Then we have the following for any $0<\delta<1$:
\begin{equation*}
    \mathbb{P}\left[\frac{1}{n}\cdot\sum_{i=1}^nX_i\geq (1+\delta) \mu\right]\leq  e^{-\frac{n\mu\delta^2}{3}} \quad\text{and}\quad \mathbb{P}\left[\frac{1}{n}\cdot\sum_{i=1}^nX_i\leq (1-\delta) \mu\right]\leq  e^{-\frac{n\mu\delta^2}{2}} 
\end{equation*}
\end{lemma}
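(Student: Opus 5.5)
This Chernoff bound is the classical multiplicative one, so I will prove it with the exponential-moment (Cramér--Chernoff) method, treating each tail separately and then using an elementary inequality on the resulting exponent. Write $S=\sum_{i=1}^n X_i$, so that $\mathbb{E}[S]=n\mu$. For the upper tail I fix $s>0$ and apply Markov's inequality to $e^{sS}$:
\[
\mathbb{P}[S\ge (1+\delta)n\mu]\le e^{-s(1+\delta)n\mu}\,\mathbb{E}[e^{sS}].
\]
By independence, $\mathbb{E}[e^{sS}]=\prod_{i=1}^n(1+\mu(e^s-1))$. Using $1+u\le e^u$, this is at most $\exp(n\mu(e^s-1))$. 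I then choose $s=\ln(1+\delta)$, which gives the standard bound
\[
\left(\frac{e^{\delta}}{(1+\delta)^{1+\delta}}\right)^{n\mu}.
\]

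The lower tail is symmetric. I take $s>0$ and apply Markov's inequality to $e^{-sS}$, which gives
\[
\mathbb{P}[S\le(1-\delta)n\mu]\le e^{s(1-\delta)n\mu}\exp(n\mu(e^{-s}-1)).
\]
Choosing $s=-\ln(1-\delta)$, which is valid since $0<\delta<1$, yields
\[
\left(\frac{e^{-\delta}}{(1-\delta)^{1-\delta}}\right)^{n\mu}.
\]

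It remains to reduce these to the stated exponents. This amounts to two scalar inequalities on $(0,1)$:
\[
(1+\delta)\ln(1+\delta)-\delta\ge \delta^2/3, \qquad (1-\delta)\ln(1-\delta)+\delta\ge \delta^2/2.
\]
For the second, I let $g(\delta)=(1-\delta)\ln(1-\delta)+\delta-\delta^2/2$. Then $g(0)=0$, $g'(\delta)=-\ln(1-\delta)-\delta\ge 0$, and so $g\ge 0$; equivalently, the Taylor series of $(1-\delta)\ln(1-\delta)+\delta$ is $\sum_{k\ge2}\delta^k/(k(k-1))$, whose terms are all nonnegative. For the first, I let $f(\delta)=(1+\delta)\ln(1+\delta)-\delta-\delta^2/3$. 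Then $f(0)=0$, $f'(\delta)=\ln(1+\delta)-2\delta/3$ with $f'(0)=0$, and $f''(\delta)=1/(1+\delta)-2/3$, which is nonnegative for $\delta\le 1/2$ and negative afterwards. Hence $f'$ increases and then decreases on $[0,1]$, and since $f'(0)=0$ and $f'(1)=\ln 2-2/3>0$, $f'$ is nonnegative on $[0,1]$. Therefore $f\ge0$.

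I expect the only mildly delicate step to be this last calculus check for the upper tail, because it requires the sign analysis of $f'$ at the endpoint $\delta=1$ rather than a monotone argument.

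Finally, I rewrite both tails in terms of the empirical mean, using $\frac1n S\ge(1+\delta)\mu \iff S\ge(1+\delta)n\mu$ and the analogous equivalence for the lower tail. This gives exactly the stated bounds $e^{-n\mu\delta^2/3}$ and $e^{-n\mu\delta^2/2}$.
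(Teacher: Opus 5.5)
Your proof is correct. The MGF bounds, the choices $s=\ln(1+\delta)$ and $s=-\ln(1-\delta)$, and both scalar inequalities all go through, including the endpoint check $f'(1)=\ln 2-2/3>0$ after $f''$ changes sign at $\delta=1/2$. The paper gives no proof of this lemma; it simply quotes the standard multiplicative Chernoff bound and later uses only the lower tail with $\delta=1/2$. Your Cram\'er--Chernoff derivation is exactly the textbook argument that the paper implicitly relies on.
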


\subsection{Estimates using exploration distribution}\label{appendix-sec:estimate-exp-dist}
Let $\mathcal{X}\subset\mathbb{R}^n$ be convex, compact, and $\mathrm{span}(\mathcal{X})=\mathbb{R}^n$.
Consider a subset  $S:=\{x_1,\dots,x_n\}\subset\mathcal{X}$ such that

$$
\sup_{x\in\mathcal{X}}\, x^\top V^{-1}x \;\le\; 2 n^2,
\qquad
V \;:=\; \frac{1}{n}\sum_{i=1}^n x_i x_i^\top \ \succ 0.
$$
Such a subset can be computed in polynomial time, provided $\mathcal{X}$ has an efficient linear optimization oracle (see \cite{hazan2016volumetric}). Now collect $N$ samples by repeating each $x_i$ exactly $r:=N/n$ times (assume $n$ divides $N$). The observations follow

$$
y_t \;=\; \langle x_t,\theta\rangle + \eta_t,
\qquad
t=1,\dots,N,
$$

where $\theta\in\mathbb{R}^n$ is fixed and $\{\eta_t\}_{t\in [N]}$ are independent, mean-zero, $\sigma^2$-subgaussian (MGF sense).

Define the matrix

$$
V \;=\; \frac{1}{N}\sum_{t=1}^N x_t x_t^\top \;=\; \frac{1}{n}\sum_{i=1}^n x_i x_i^\top,
$$

and the ordinary least squares estimator

$$
\widehat\theta \;=\; V^{-1}\left(\frac{1}{N}\sum_{t=1}^Nx_ty_t\right) .
$$

As $y_t \;=\; \langle x_t,\theta\rangle + \eta_t$, we get the following:

$$
\widehat\theta-\theta
\;=\;
V^{-1}\Big(\frac{1}{N}\sum_{t=1}^N x_t\,\eta_t\Big).
$$

Let $
Z \;:=\; V^{1/2}(\widehat\theta-\theta)
\;=\; \frac{1}{N}\sum_{t=1}^N V^{-1/2}x_t\,\eta_t .
$

For any $z\in\mathbb{R}^n$ and any $x\in\mathbb{R}^n$,

$$
|\langle x,z\rangle|
\;=\;
|\langle V^{-1/2}x,\; V^{1/2}z\rangle|
\;\le\;
\|V^{-1/2}x\|_2\;\|V^{1/2}z\|_2
\;=\;
\sqrt{x^\top V^{-1}x}\;\|V^{1/2}z\|_2 .
$$

Taking $z=\widehat\theta-\theta$ and supremum over $x\in\mathcal{X}$ yields

$$
\sup_{x\in\mathcal{X}} |\langle x,\widehat\theta-\theta\rangle|
\;\le\;
\Big(\sup_{x\in\mathcal{X}} \sqrt{x^\top V^{-1}x}\Big)\;\|Z\|_2 .
$$

By the design choice,

$$
\sup_{x\in\mathcal{X}} x^\top V^{-1}x \;\le\; 2 n^2
\quad\Longrightarrow\quad
\sup_{x\in\mathcal{X}} |\langle x,\widehat\theta-\theta\rangle|
\;\le\;
\sqrt{2}\,n\;\|Z\|_2 .
$$

Due to the results in Chapter 20 of \cite{lattimore2020bandit}, we have the following with probability at least $1-\delta$:
\[
\|Z\|_2 \;\le\; 2\sigma \sqrt{\tfrac{2}{N}\Big(n\ln 6 + \ln \tfrac{1}{\delta}\Big)}
\;\le\; 4\sigma \sqrt{\tfrac{n + \ln(1/\delta)}{N}} \,,
\]

Hence, we have

\begin{equation}\label{eq:ols}
\Pr\!\left(
\sup_{x\in\mathcal{X}} |\langle x,\widehat\theta-\theta\rangle|
\;\le\;
6\sigma\sqrt{\frac{n^3+n^2\ln(1/\delta)}{N}}
\right)
\;\ge\; 1-\delta .
\end{equation}

\subsection{Sampling method and estimator}\label{appendix-sec:estimator}
Recall that in the row player’s algorithm, during the $s$-th round of phase $t$, it selects $x_{t,s}$. Analogous to the row player's algorithm, we can describe an algorithm for the column player, where in the $s$-th round of the $t$-th phase it selects $y_{t,s}$. Denote the row player’s true expected utility vector as
\[
\bar{\theta}_t := A \widehat{y}_{t},
\]
where $\widehat{y}_{t} := \mathbb{E}[y_{t,s}]$ is the column player’s averaged strategy in phase $t$.

Recall that $\mathcal{D}_{\mathcal{X}}$ is an uniform distribution over a subset $\{x_1,x_2,\ldots,x_n\}$. Let us fix a sequence of vectors $x_{t,1},x_{t,2},\ldots,x_{t,B_t}$ such that $|\{s:x_{t,s}=\bar{x}_t\}|\geq B_t/4$ and for all $i\in [n]$, $|\{s:x_{t,s}=(1-\lambda_t)\bar{x}_t+\lambda_tx_i\}|\geq B_t/(4n)$. Conditioned on this sequence, we now construct an estimator $\widehat\theta_t^x$ of $\bar\theta_t^x$ with desirable concentration guarantees. 

In each round $s$ of phase $t$, the row player plays $x_{t,s} \in \mathcal{X}$ and receives the reward
\[
r_{t,s} = \langle x_{t,s}, Ay_{t,s}\rangle.
\]
We can decompose this reward as
\[
r_{t,s} = \langle x_{t,s}, A\widehat{y}_{t}\rangle + \langle x_{t,s}, A(y_{t,s} - \widehat{y}_t)\rangle=\langle x_{t,s}, \bar{\theta}_t\rangle+\eta_{t,s},
\]
where the second term $\eta_{t,s}:=\langle x_{t,s}, A(y_{t,s} - \widehat{y}_t)\rangle$ is a zero-mean 4$\lambda_t^2$-subgaussian noise in each phase $t$ due to Lemma \ref{subgaussian-noise-lemma}. Note that the noises $\eta_{t,s}$ are independent and we can correctly apply Lemma \ref{subgaussian-noise-lemma} as the sequence $\{x_{t,s}\}_{s\in [B_t]}$ is fixed.

Let $\{s_1, s_2, \ldots, s_{B_t/2}\}$ be the first $B_t/4$ indices such that $x_{t,s} = \bar{x}_t$. Similarly, let $\{s_1', s_2', \ldots, s_{B_t/4}'\}$ be the set of indices consisting of the first $B_t/(4n)$ indices such that $x_{t,s} = (1-\lambda_t)\bar{x}_t + \lambda_t x_i$ for all $i \in [n]$.

We construct pairs $(s_i, s_i')$ such that
\[
x_{t,s_i'} = (1-\lambda_t)x_{t,s_i} + \lambda_t z_{t,s_i}.
\]
where $z_{t,s_i}\in\{x_1,x_2,\ldots,x_n\}$. Now consider the transformed reward
\[
\widehat{r}_{t,s_i'} := \frac{r_{t,s_i'} - (1-\lambda_t)r_{t,s_i}}{\lambda_t}
= \langle z_{t,s_i'}, \theta_t \rangle + \widehat{\eta}_{t,s_i'},
\]
where $\widehat{\eta}_{t,s_i'}$ is zero-mean $8$-subgaussian noise.

Thus, from the pairs $(z_{t,s_i'}, \widehat{r}_{t,s_i'})$, we obtain an unbiased estimator of $\bar{\theta}_t^x$:
\[
\widehat{\theta}_t^x
= \left(\sum_{i=1}^{B_t/4} z_{t,s_i'} z_{t,s_i'}^\top\right)^{-1}
  \sum_{i=1}^{B_t/4} \widehat{r}_{t,s_i'} z_{t,s_i'}=\left(\sum_{i=1}^{n} x_i x_i^\top\right)^{-1}
  \left(\frac{4}{B_t}\sum_{i=1}^{B_t/4} \widehat{r}_{t,s_i'} z_{t,s_i'}\right).
\]

Due to Eq. \eqref{eq:ols}, conditioning on the sequence $\{x_{t,s}\}_{s\in[B_t]}$ we have the following :
\begin{equation}\label{eq:seq-cond-prob}
\Pr\!\left(
\sup_{x\in\mathcal{X}} |\langle x,\widehat\theta_t^x-\bar{\theta}_t^x\rangle|
\;\le\;
48\sqrt{\tfrac{n^3}{t^3}}
\;\middle|\;\{x_{t,s}\}_{s\in[B_t]}
\right)
\;\ge\; 1-\delta/(8t^2) .
\end{equation}

Now consider a sequence of vectors $x_{t,1},x_{t,2},\ldots,x_{t,B_t}$ generated by our algorithm and define random variables $N_{t,0}:=|\{s:x_{t,s}=\bar{x}_t\}|$ and for all $i\in [n]$, $N_{t,i}:=|\{s:x_{t,s}=(1-\lambda_t)\bar{x}_t+\lambda_tx_i\}|$. Observe that $\mathbb{E}[N_{t,0}]=B_t/2$ and $\mathbb{E}[N_{t,i}]=B_t/(2n)$ for all $i\in [n]$. Consider $i\in [n]\cup\{0\}$. Due to Chernoff bound, for any phase $t$ such that $B_t\geq 32 n^2\ln(8t^2/\delta)$ we have the following
\begin{align*}
    \Pr\left(
N_{t,i}< \mathbb{E}[N_{t,i}]/2)\right)&\leq \exp(-\frac{\mathbb{E}[N_{t,i}]}{8})\\
&\leq \exp(-\frac{B_t}{16n})\tag{as $\mathbb{E}[N_{t,i}]\geq B_t/(2n)$}\\
&\leq \exp(-2n\ln(8t^2/\delta))\tag{as $B_t\geq 32 n^2\ln(8t^2/\delta)$}\\
&\leq \delta/(16nt^2),
\end{align*}
where we get the last step follows from the fact that $x\ln(1/y)\geq \ln(x/y)$ for all $x\geq 2$ and $0<y\leq 1/2$.

Now due to union bound, for any phase $t$ such that $B_t\geq 32 n^2\ln(8t^2/\delta)$ we have following
\begin{equation}\label{eq:seq-chernoff}
\Pr\left(
N_{t,0}\geq B_t/4\text{ and }N_{t,i}\geq B_t/(4n)\; \forall i\in[n]
\right)\geq 1-\delta/(8t^2)
\end{equation}

Hence, due to Eq. \eqref{eq:seq-cond-prob} and Eq. \eqref{eq:seq-chernoff}, we have the following
\begin{equation}\label{eq:final-estimate-prob}
\Pr\!\left(
\sup_{x\in\mathcal{X}} |\langle x,\widehat\theta_t^x-\bar{\theta}_t^x\rangle|
\;\le\;
48\sqrt{\tfrac{n^3}{t^3}}
\right)
\;\ge\; 1-\delta/(4t^2) .
\end{equation}

Note that if $B_t<32 n^2\ln(8t^2/\delta)$, then we set $\widehat \theta^x_t=\mathbf{0}$ and the above inequality holds trivially.




\section{Proofs for primal-dual norms}\label{appendix:norms}
First, we prove that $\|z\|_{*,\mathcal{X}}$ is a norm. 

(i) Positive definiteness. If $z=0$, then $\|z\|_{*,\mathcal{X}}=0$. If $\|z\|_{*,\mathcal{X}}=0$, then $|\langle x,z\rangle|=0$ for all $x\in \mathcal{X}$. As $\mathrm{span}(\mathcal{X})=\mathbb{R}^d$, we have $\langle v,z\rangle=0$ for all $v\in\mathbb{R}^d$. This implies that $\|z\|_2=0$, so $z=0$.

(ii) Absolute homogeneity.
For any scalar $\alpha$,
$$
\|\alpha z\|_{*,\mathcal{X}}=\max_{x\in \mathcal{X}}|\langle x,\alpha z\rangle|
=|\alpha|\max_{x\in \mathcal{X}}|\langle x,z\rangle|
=|\alpha|\,\|z\|_{*,\mathcal{X}}.
$$

(iii) Triangle inequality.
For any $z,w$,
\begin{align*}
\|z+w\|_{*,\mathcal{X}}&=\max_{x\in X}|\langle x,z+w\rangle|\\
&\le \max_{x\in X}\big(|\langle x,z\rangle|+|\langle x,w\rangle|\big)\\
&\le \|z\|_{*,\mathcal{X}}+\|w\|_{*,\mathcal{X}}.
\end{align*}

Thus $\|\cdot\|_{*,\mathcal{X}}$ is a norm.

Next we prove that $\|z\|_{\mathcal{X}}$ is a norm.

(i) Positive definiteness. If $z=0$, then $\|z\|=0$. If $z\neq 0$, then $\|z\|\geq \langle \frac{z}{\|z\|_{*,\mathcal{X}}},z\rangle=\frac{\|z\|_2^2}{\|z\|_{*,\mathcal{X}}}>0$. 

(ii) Absolute homogeneity.
For any scalar $\alpha$,
\begin{align*}
\|\alpha z\|_{\mathcal{X}}&=\max_{\|y\|_{*,\mathcal{X}}\leq 1}\langle y,\alpha z\rangle\\
&= \max_{\|y\|_{*,\mathcal{X}}\leq 1}|\langle y,\alpha z\rangle|\tag{as $\|y\|_{*,\mathcal{X}}=\|-y\|_{*,\mathcal{X}}$}\\
&=|\alpha|\max_{\|y\|_{*,\mathcal{X}}\leq 1}|\langle y, z\rangle|\\
&=|\alpha|\max_{\|y\|_{*,\mathcal{X}}\leq 1}|\langle y, z\rangle|\\
&=|\alpha|\max_{\|y\|_{*,\mathcal{X}}\leq 1}\langle y, z\rangle=|\alpha|\|z\|_{\mathcal{X}}\\
\end{align*}
(iii) Triangle inequality.
For any $z,w$,
\begin{align*}
\|z+w\|_{\mathcal{X}}&=\max_{\|y\|_{*,\mathcal{X}}\leq 1}|\langle y,z+w\rangle|\\
&\le \max_{\|y\|_{*,\mathcal{X}}\leq 1}\big(|\langle y,z\rangle|+|\langle y,w\rangle|\big)\\
&\le \|z\|_{\mathcal{X}}+\|w\|_{\mathcal{X}}
\end{align*}

Thus $\|\cdot\|_{\mathcal{X}}$ is a norm.

Now we show that $\|z\|_{\mathcal{X}}$ and $\|z\|_{*,\mathcal{X}}$ are primal-dual norm pairs. It suffices to show $\max_{x\in\mathcal{X}}|\langle x,z\rangle|=\max_{\|y\|\leq 1}\langle y,z\rangle$ for any $z$. Let $K:=\mathrm{conv}(\mathcal{X}\cup-\mathcal{X})$.

Define $B:=\{y:\|y\|_{\mathcal{X}}\le 1\}$. We begin by showing $B=K$.

(i) $K\subseteq B$.
Consider $y\in K$. Then due to Carathéodory's theorem, there exists a subset $\{x_1,x_2,\ldots, x_\ell\}\subseteq \mathcal{X}$ such that $y=\sum_{i=1}^\ell\lambda_i s_i x_i$ where $\lambda_i\ge0$, $\sum_{i=1}^\ell\lambda_i=1$, $s_i\in\{-1,+1\}$. Now we have the following:
\begin{align*}
    \|y\|_{\mathcal{X}}&=\max_{\|z\|_{*,\mathcal{X}}\leq 1}\langle z,y\rangle\\
    &=\max_{\|z\|_{*,\mathcal{X}}\leq 1}\sum_i\lambda_i s_i\langle z,x_i\rangle\\
    &=\max_{\|z\|_{*,\mathcal{X}}\leq 1}\sum_i\lambda_i |\langle z,x_i\rangle|\\
    &\leq \max_{\|z\|_{*,\mathcal{X}}\leq 1}\sum_i\lambda_i\cdot 1=1
\end{align*}
Hence $y\in B$. Since $y$ was chosen arbitrarily, we have $K\subseteq B$.

(ii) $B\subseteq K$.
Consider $y\notin K$. As $K$ is compact convex, there exists a vector $z$ such that
$$
\langle y,z\rangle \;>\; \max_{x\in K}\langle x,z\rangle,
$$
due to the hyperplane separation theorem.

Set $t:=\max_{x\in K}\langle x,z\rangle=\max_{x\in \mathcal{X}}|\langle x,z\rangle|$. The last equality follows as $K=\operatorname{conv}(\mathcal X\cup -\mathcal X)$. Note that $t>0$, and for $u:=z/t$,

$$
\|u\|_{*,\mathcal{X}}=\max_{x\in \mathcal{X}}|\langle x,u\rangle|=\frac{1}{t}\max_{x\in \mathcal{X}}|\langle x,z\rangle|=1.
$$

Now we have the following:

$$
\|y\|_{\mathcal{X}}=\max_{\|x\|_{*,\mathcal{X}}\leq 1}\langle x,y\rangle\geq \langle u,y\rangle=\frac{\langle z,y\rangle}{t} \;>\; 1,
$$

Thus $y\notin B$. Therefore $B\subseteq K$.

Hence, for any $z$, we have,
$$
\max_{\|y\|_{\mathcal{X}}\le1}\langle y,z\rangle
=\max_{y\in B}\langle y,z\rangle
=\max_{y\in K}\langle y,z\rangle
=\max_{x\in \mathcal{X}}|\langle x,z\rangle|.
$$

\section{Optimistic FTRL Algorithm and RVU Property}\label{appsec:OFTRL_regularizer_proof}

For completeness, we adapt Proposition 7 of \cite{syrgkanis2015fast} to general convex set. 

\oftrl*
\begin{proof}

We start by restating the updates we make with utility sequence $\{u_k\}$, $k \in [t-1]$, and the regularizer $R$.

Lemma 1 in lecture notes \cite{haipeng_lecture_notes} gives
\[
\sum_{t=1}^T\left\langle u_t, x-\tilde{x}_t\right\rangle \leq \frac{D_R\left(x, \tilde{x}_1^{\prime}\right)}{\eta}+\sum_{t=1}^T\left\langle u_t-u_{t-1}, \tilde{x}_{t+1}^{\prime}-\tilde{x}_t\right\rangle-\frac{1}{\eta} \sum_{t=1}^T\left(D_R\left(\tilde{x}_t, \tilde{x}_t^{\prime}\right)+D_R\left(\tilde{x}_{t+1}^{\prime}, \tilde{x}_t\right)\right)
\]
where $\tilde{x}_t^{\prime}$ is a hypothetical "vanilla" FTRL player that doesn't use the optimistic guess $u_{t-1}$. 

Similar to how Theorem 1 is shown in \cite{haipeng_lecture_notes}, we bound the middle term with help of Lemma 4 in Lecture 2 notes of the same lecture note series. 
\[
\left\|\tilde{x}_t-x_{t+1}^{\prime}\right\| \leq \frac{\eta}{\sigma}\left\|\left(\sum_{k=1}^{t-1} u_k+u_{t-1}\right)-\left(\sum_{k=1}^t u_k\right)\right\|_* \leq \frac{\eta}{\sigma}\left\|u_{t-1}-u_t\right\|_*
\]

And via Cauchy-Schwartz step

\begin{align*}
\left\langle u_{t-1}-u_t, \tilde{x}_t-\tilde{x}_{t+1}'\right\rangle & \leq\left\|u_{t-1}-u_t\right\|_* \cdot\left\|\tilde{x}_t-x_{t+1}^{\prime}\right\|\\
&\le \|u_{t-1}-u_t\|_* \cdot \left( \frac{\eta}{\sigma}\|u_{t-1}-u_t\|_* \right) \\
&= \frac{\eta}{\sigma}\|u_t-u_{t-1}\|_{*}^2
\end{align*}

Summing over $t$ and putting everything together
\[
\sum_{t=1}^T\left\langle u_{t-1}-u_t, \tilde{x}_t-\tilde{x}_{t+1}^{\prime}\right\rangle \leq \frac{\eta}{\sigma} \sum_{t=1}^T\left\|u_t-u_{t-1}\right\|_*^2
\]

Now we bound the Bregman terms similar to how it is done in the lecture notes. We first drop the non-negative terms at the boundaries and shift the index to get a lower bound:
\begin{align*}
    \sum_{t=1}^T\left(D_R\left(\tilde{x}_t, \tilde{x}_t^{\prime}\right)+D_R\left(\tilde{x}_{t+1}^{\prime}, \tilde{x}_t\right)\right) &\geq \sum_{t=2}^T\left(D_R\left(\tilde{x}_t, \tilde{x}_t^{\prime}\right)+D_R\left(\tilde{x}_t^{\prime}, \tilde{x}_{t-1}\right)\right)\\
    &\geq \frac{\sigma}{2} \sum_{t=2}^T\left(\left\|\tilde{x}_t-\tilde{x}_t^{\prime}\right\|^2+\left\|\tilde{x}_t^{\prime}-\tilde{x}_{t-1}\right\|^2\right) \quad\quad \text{($R$ is $\sigma$-strongly convex)}\\
    &\geq \frac{\sigma}{4} \sum_{t=2}^T\left(\left\|\tilde{x}_t-\tilde{x}_t^{\prime}\right\|+\left\|\tilde{x}_t^{\prime}-\tilde{x}_{t-1}\right\|\right)^2 \quad\quad \text{($a^2+b^2 \geq(a+b)^2 / 2$)}\\
    &\geq \frac{\sigma}{4} \sum_{t=2}^T\left\|\left(\tilde{x}_t-\tilde{x}_t^{\prime}\right)+\left(\tilde{x}_t^{\prime}-\tilde{x}_{t-1}\right)\right\|^2 \quad \quad \text{(Triangle Inequality)}\\
    &=\frac{\sigma}{4} \sum_{t=2}^T\left\|\tilde{x}_t-\tilde{x}_{t-1}\right\|^2
\end{align*}

Thus we have a upper bound on the negative term of Bregman divergence:
\[
-\frac{1}{\eta} \sum_{t=1}^T\left(D_R\left(\tilde{x}_t, \tilde{x}_t^{\prime}\right)+D_R\left(\tilde{x}_{t+1}^{\prime}, \tilde{x}_t\right)\right) \leq-\frac{\sigma}{4 \eta} \sum_{t=2}^T\left\|\tilde{x}_t-\tilde{x}_{t-1}\right\|^2
\]

Putting everything together and using the fact that $\tilde{x}_1=\tilde{x}_1'$.
\[
 \sum_{t=1}^T\langle u_t,x-\tilde{x}_t\rangle \le \frac{D_R(x,\tilde{x}_1')}{\eta} + \frac{\eta}{\sigma}\sum_{t=1}^T\|u_t-u_{t-1}\|_*^2 - \frac{\sigma}{4\eta}\sum_{t=2}^T\|\tilde{x}_t-\tilde{x}_{t-1}\|^2
\]





\end{proof}

\section{RVU with Estimation Error}\label{appendix:regret-estimation-error}




\regretwithestimationerror*
\begin{proof}
    For the simplicity of presentation, let us $\|\cdot\|$ to denote the primal norm $\|\cdot\|_{\mathcal{X}}$ and $\|\cdot\|_*$ to denote the dual norm $\|\cdot\|_{*,\mathcal{X}}$.
    Recall that $u_t^x=A\tilde{y}_t$ is the true utility of each phase $t$.
    Let $\bar{u}_t^x := t\,\bar\theta_t^x-(t-1)\,\bar\theta_{t-1}^x$ denote the pseudo-utility for phase $t$. Note that $\bar{u}_t^x=\mathbb{E}[\widehat{u}_t^x]$. Let $\mathcal{D}_t$ be a distribution over $\mathcal{X}$ such that with probability $1/2$ we choose $\bar{x}_t$ and with remaining probability we uniformly sample an element from $\{y_1,y_2,\ldots,y_n\}$ and choose it. Recall that $\widehat{y}_t=\mathbb{E}[y_{t,s}]$. Let $\widehat{z}_t:=\mathbb{E}_{z\sim\mathcal{D}_t}[z]$. Observe that $\widehat{y}_t=(1-\lambda_t)\bar{y}_t+\lambda_t\widehat{z}_t$. Recall that $\bar{y}_t=\frac{1}{t}\sum_{\ell=1}^t\tilde{y}_\ell$. Now observe that $\bar{u}_t^x$ can be further simplied as follows:
    \begin{align}
        \bar{u}_t^x &= t\,\bar\theta_t^x-(t-1)\,\bar\theta_{t-1}^x\nonumber\\
        &=t\cdot A\widehat{y}_t-(t-1)\cdot A\widehat{y}_{t-1}\nonumber\\
        &=(1-\lambda_t)A\tilde{y}_t+(\lambda_{t-1}-\lambda_t)\sum_{s=1}^{t-1}A\tilde{y}_s+t\cdot \lambda_t A\widehat{z}_t-(t-1)\cdot \lambda_{t-1} A\widehat{z}_{t-1}\label{st:true-pseudo-ut}
    \end{align}
    Now, we have the following due to triangle inequality:
    \begin{align*}
        \|\bar{u}_t^x-\bar{u}_{t-1}^x\|_*&\leq \|u_t^x-u_{t-1}^x\|_*+\lambda_t\|u_t^x\|_*+\lambda_{t-1}\|u_{t-1}^x\|_*+(\lambda_{t-1}-\lambda_t)\cdot\sum_{s=1}^{t-1} \|u_s^x\|_*+t\cdot \lambda_t \|A\widehat{z}_t\|_*\\
        &\quad +(t-1)\cdot \lambda_{t-1}\|A\widehat{z}_{t-1}\|_*+(\lambda_{t-2}-\lambda_{t-1})\cdot\sum_{s=1}^{t-2} \|u_s^x\|_*+(t-1)\cdot \lambda_{t-1} \|A\widehat{z}_{t-1}\|_*\\
        &\quad +(t-2)\cdot \lambda_{t-2}\|A\widehat{z}_{t-2}\|_*\\
        &\leq \|u_t^x-u_{t-1}^x\|_*+O(1/t),
    \end{align*}
    where the last inequality follows from the fact that $\lambda_t=\frac{1}{t^2}$ and $|\langle x,Ay\rangle|\leq 1$ for all $x\in \mathcal{X}$, $y\in\mathcal{Y}$.

    Next, we have the following due to Eq. \eqref{st:true-pseudo-ut} and the fact that $|\langle x,y\rangle|\leq \|x\|\cdot\|y\|_*$:
    \begin{align*}
        \langle u_t^x,x-\tilde{x}_t\rangle &\leq \langle\bar{u}_t^x,x-\tilde{x}_t\rangle+\lambda_t\|u_t^x\|_*\cdot\|x-\tilde{x}_t\|+(\lambda_{t-1}-\lambda_t)\sum_{s=1}^{t-1}\|u_t^x\|_*\cdot\|x-\tilde{x}_t\|\\
        &\quad+t\cdot \lambda_t \|A\widehat{z}_t\|_*\cdot\|x-\tilde{x}_t\|+(t-1)\cdot \lambda_{t-1}\|A\widehat{z}_{t-1}\|_*\cdot\|x-\tilde{x}_t\|\\
        &\leq \langle\bar{u}_t^x,x-\tilde{x}_t\rangle+2\lambda_t\|u_t^x\|_*+2(\lambda_{t-1}-\lambda_t)\sum_{s=1}^{t-1}\|u_t^x\|_*\\
        &\quad+2t\cdot \lambda_t \|A\widehat{z}_t\|_*+2(t-1)\cdot \lambda_{t-1}\|A\widehat{z}_{t-1}\|_*\tag{as $\|x-\tilde{x}_t\|\leq \|x\|+\|\tilde{x}_t\|\leq 2$}\\
        &\leq \langle\bar{u}_t^x,x-\tilde{x}_t\rangle+O(1/t),
    \end{align*}
     where the last inequality follows from the fact that $\lambda_t=\frac{1}{t^2}$ and $|\langle x,Ay\rangle|\leq 1$ for all $x\in \mathcal{X}$, $y\in\mathcal{Y}$.
     
    Recall that $\Delta^x_t \coloneq \widehat \theta_t - \bar \theta_t$. Now we define $\delta_t:=\widehat{u}_t^x-\bar{u}_t^x=\widehat{u}_t^x-\mathbb{E}[\widehat{u}_t^x]$. Then  
    \begin{align*}
    \delta_t&=t\widehat \theta_t^x - (t-1)\widehat \theta_{t-1}^x - t\bar \theta_t^x + (t-1)\bar \theta_{t-1}^x=t\Delta^x_t-(t-1)\Delta^x_{t-1}
\end{align*}

    Pseudo regret can be written in terms of regret against the estimated utilities and the error term $\delta_t$.
    \[
    \sum_{t=1}^T\left\langle \bar{u}_t^x, x-\tilde{x}_t\right\rangle=\sum_{t=1}^T\left\langle\widehat{u}_t^x, x-\tilde{x}_t\right\rangle-\sum_{t=1}^T\left\langle\delta_t, x-\tilde{x}_t\right\rangle
    \]

    We apply Lemma \ref{lemma:oftrl} with $\sigma:=1$ to the sequence the algorithm actually sees: $\widehat u_t^x$. The lemma above gives a bound on $\sum_{t=1}^T \langle \widehat u_t^x, x - \tilde{x}_t\rangle$:
\[
\sum_{t=1}^T \langle \widehat u_t^x, x- \tilde{x}_t \rangle \le \frac{D_R(x,\tilde{x}_1)}{\eta}
+ \frac{\eta}{\sigma}\sum_{t=1}^T\|\widehat u_t^x-\widehat u_{t-1}^x\|_*^2
- \frac{\sigma}{4\eta}\sum_{t=1}^T\|\tilde{x}_t-\tilde{x}_{t-1}\|^2.
\]

Substituting this back, we get our main inequality:
\begin{align} \label{eq:main_ineq} \tag{1}
\sum_{t=1}^T \langle \bar{u}_t^x, x-\tilde{x}_t \rangle \le \frac{D_R(x,\tilde{x}_1)}{\eta}
& + \underbrace{\frac{\eta}{\sigma}\sum_{t=1}^T\|\widehat{u}_t^x - \widehat{u}_{t-1}\|_*^2}_{\text{Term II}} 
+\underbrace{\sum_{t=1}^T \langle \delta_t, \tilde{x}_t-x \rangle}_{\text{Term I}}
- \frac{\sigma}{4\eta}\sum_{t=1}^T\|\tilde{x}_t-\tilde{x}_{t-1}\|^2
\end{align}

\paragraph{Bounding Term I}
We rewrite Term I using summation by parts.
\begin{align*}
\text{Term I} = \sum_{t=1}^T \langle \delta_t, \tilde{x}_t-x \rangle &= \sum_{t=1}^T \langle t\Delta^x_t - (t-1)\Delta^x_{t-1}, \tilde{x}_t-x \rangle \\
&= \langle T\Delta^x_T, \tilde{x}_T-x \rangle + \sum_{t=1}^{T-1} \langle t\Delta^x_t, \tilde{x}_t - \tilde{x}_{t+1} \rangle \\
&\le 2\|T\Delta^x_T\|_* \|\tilde{x}_T-x\| + \sum_{t=1}^{T-1} \|t\Delta^x_t\|_* \|\tilde{x}_t - \tilde{x}_{t+1}\|
\end{align*}

Since $x,\tilde{x}_t \in \cX$, the term $\|\tilde{x}_T-x\| \le 2\sup_{z\in \cX} \|z\|\leq 2$. We get the last inequality due to the fact that $\sup_{z \in \cX}\|z\|\leq 1$. Combining the bound above with negative movement term from \eqref{eq:main_ineq}, and using a separate $\frac{\sigma}{16\eta}$ portion for this bound, we get for this sum part:
\[
\sum_{t=1}^{T-1}\left\|t \Delta^x_t\right\|_*\left\|\tilde{x}_t-\tilde{x}_{t+1}\right\|-\frac{\sigma}{16 \eta} \sum_{t=2}^T\left\|\tilde{x}_t-\tilde{x}_{t-1}\right\|^2
\]

Using Young's inequality, $ab \le \frac{a^2}{2c}+\frac{cb^2}{2}$, with $a=\|t\Delta^x_t\|_*$, $b= \|\tilde{x}_t - \tilde{x}_{t+1}\|$, and $c=\frac{\sigma}{8\eta}$,
\[
\left\|t \Delta^x_t\right\|_*\left\|\tilde{x}_t-\tilde{x}_{t+1}\right\| \leq \frac{\left\|t \Delta^x_t\right\|_*^2}{2(\sigma / 8 \eta)}+\frac{(\sigma / 8 \eta)\left\|\tilde{x}_t-\tilde{x}_{t+1}\right\|^2}{2}=\frac{4 \eta}{\sigma}\left\|t \Delta^x_t\right\|_*^2+\frac{\sigma}{16 \eta}\left\|\tilde{x}_t-\tilde{x}_{t+1}\right\|^2
\]

Summing this from $t=1$ to $T-1$, the movement terms cancel the $-\frac{\sigma}{16 \eta}\sum_{t=2}^T\|\tilde{x}_t - \tilde{x}_{t-1}\|^2$ term, leaving on second-order error terms. Thus the contribution from Term I is bounded by:
\[
\text{Term I} -\frac{\sigma}{16 \eta}\sum_{t=2}^T\|\tilde{x}_t - \tilde{x}_{t-1}\|^2  \leq 2\|T\Delta^x_T\|_*  + \frac{4\eta}{\sigma}\sum_{t=1}^{T-1}\|t \Delta^x_t\|_{*}^2
\]

\paragraph{Bounding Term II} We use the inequality $(a+b)^2 \leq 2a^2 + 2b^2$ and the triangle inequality.

\begin{align*}
\|\widehat{u}_t^x - \widehat{u}_{t-1}^x\|_*^2 &= \|(\bar{u}_t^x - \bar{u}_{t-1}^x) + (\delta_t - \delta_{t-1})\|_*^2 \\
&\le 2\|\bar{u}_t^x-\bar{u}_{t-1}^x\|_*^2 + 2\|\delta_t - \delta_{t-1}\|_*^2\\
&\le  4\|u_t^x-u_{t-1}^x\|_*^2 + 4\|\delta_t\|_{*}^2+4 \|\delta_{t-1}\|_{*}^2+O(1/t^2)
\end{align*}

\begin{align*}
\sum_{t=1}^T (\left\|\delta_t\right\|_*^2 + \left\|\delta_{t-1}\right\|_*^2) 
&\le 2 \sum_{t=1}^T\left\|\delta_t\right\|_*^2\\
&\le 4 \sum_{t=1}^T \left\|t\Delta^x_t\right\|_*^2+4\sum_{t=1}^T  \| (t-1)\Delta^x_{t-1}\|_*^2\\
&\le 8 \sum_{t=1}^T \left\|t\Delta^x_t\right\|_*^2
\end{align*}

\begin{align*}
\text{Term II}=\frac{\eta}{\sigma}\sum_{t=1}^T \|\widehat{u}_t^x - \widehat{u}_{t-1}\|_*^2\le \frac{\eta}{\sigma} \left(\sum_{t=1}^T 4\|u_t^x-u_{t-1}^x\|_*^2 + 32\sum_{t=1}^T\|t\Delta^x_t\|_*^2\right) +O\left(\frac{\eta}{\sigma}\right)
\end{align*}

\paragraph{Combining the bounds.}

\begin{align*}
\sum_{t=1}^T \langle \bar{u}_t^x, x-\tilde{x}_t \rangle &
\le \frac{D_R(x,\tilde{x}_1)}{\eta} + \text{Term II} + \text{Term I} - \frac{\sigma}{4\eta}\sum_{t=1}^T\|\tilde{x}_t-\tilde{x}_{t-1}\|^2 \\
& = \frac{D_R(x,\tilde{x}_1)}{\eta} + \text{Term II} + \left(\text{Term I} - \frac{\sigma}{16\eta}\sum_{t=2}^T\|\tilde{x}_t-\tilde{x}_{t-1}\|^2\right) \\ & \quad - \frac{\sigma}{4\eta}\|\tilde{x}_1-x_0\|^2 - \left(\frac{\sigma}{4\eta} - \frac{\sigma}{16\eta}\right)\sum_{t=2}^T\|\tilde{x}_t-\tilde{x}_{t-1}\|^2 \\
& \leq \frac{D_R(x,\tilde{x}_1)}{\eta} + 2\|T\Delta^x_T\|_* \sup_{z \in \mathcal{X}}\|z\| + \frac{36\eta}{\sigma}\sum_{t=1}^{T}\|t \Delta^x_t\|_{*}^2 + \frac{4\eta}{\sigma} \sum_{t=1}^T \|u_t^x-u_{t-1}^x\|_*^2 \\ & \quad - \frac{\sigma}{4\eta}\|\tilde{x}_1\|^2 - \frac{3\sigma}{16\eta}\sum_{t=2}^T\|\tilde{x}_t-\tilde{x}_{t-1}\|^2+O\left(\frac{\eta}{\sigma}\right)
\end{align*}

Since $\sigma/4\eta > 3\sigma/16\eta$, we can weaken the bound on the $\|\tilde{x}_1\|^2$ term to get a single, compact sum:
$$- \frac{\sigma}{4\eta}\|\tilde{x}_1\|^2 - \frac{3\sigma}{16\eta}\sum_{t=2}^T\|\tilde{x}_t-\tilde{x}_{t-1}\|^2 \le - \frac{3\sigma}{16\eta}\sum_{t=1}^T\|\tilde{x}_t-\tilde{x}_{t-1}\|^2$$
This gives the final result:

\begin{align*}
\sum_{t=1}^T \langle u_t^x, x-\tilde{x}_t \rangle \leq \frac{D_R(x,\tilde{x}_1)}{\eta} &+ 2\|T\Delta^x_T\|_*  + \frac{36\eta}{\sigma}\sum_{t=1}^{T}\|t \Delta^x_t\|_{*}^2 \\ &+ \frac{4\eta}{\sigma} \sum_{t=1}^T \|u_t^x-u_{t-1}^x\|_*^2 - \frac{3\sigma}{16\eta}\sum_{t=1}^T\|\tilde{x}_t-\tilde{x}_{t-1}\|^2+O\left(\frac{\eta}{\sigma}+\log T\right)
\end{align*}

\end{proof}

\end{document}